\setlist[enumerate]{leftmargin=.5in}
\setlist[itemize]{leftmargin=.5in}
\crefname{hypothesis}{Hypothesis}{Hypotheses}
\title{Mirror Bridges Between Probability Measures%\thanks{Submitted to the editors DATE. \funding{The first author acknowledges the generous support of a MathWorks Engineering Fellowship. The last author acknowledges the generous support of Army Research Office grants W911NF2010168 and W911NF2110293, of National Science Foundation grant IIS2335492, from the CSAIL Future of Data program, from the MIT–IBM Watson AI Laboratory, from the Wistron Corporation, and from the Toyota–CSAIL Joint Research Center.}}
}
\author{Leticia Mattos Da Silva\thanks{Massachusetts Institute of Technology, Cambridge, MA.}
\and Silvia Sella\'n\thanks{Columbia University, New York, NY.}
\and Francisco Vargas\thanks{Xaira Theurapetics, London, UK. \funding{The first author acknowledges the generous support of a MathWorks Engineering Fellowship. The last author acknowledges the generous support of Army Research Office grants W911NF2010168 and W911NF2110293, of National Science Foundation grant IIS2335492, from the CSAIL Future of Data program, from the MIT–IBM Watson AI Laboratory, from the Wistron Corporation, and from the Toyota–CSAIL Joint Research Center.}}
\and Justin Solomon\footnotemark[1]
}
\newcommand*{\addFileDependency}[1]{% argument=file name and extension
  \typeout{(#1)}% latexmk will find this if $recorder=0 (however, in that case, it will ignore #1 if it is a .aux or .pdf file etc and it exists! if it doesn't exist, it will appear in the list of dependents regardless)
  \@addtofilelist{#1}% if you want it to appear in \listfiles, not really necessary and latexmk doesn't use this
  \IfFileExists{#1}{}{\typeout{No file #1.}}% latexmk will find this message if #1 doesn't exist (yet)
}
\newcommand*{\myexternaldocument}[1]{%
    \externaldocument{#1}%
    \addFileDependency{#1.tex}%
    \addFileDependency{#1.aux}%
}
\newcommand{\ampalgo}{
%\STATE{\subsection*{Input:} }
%\STATE{}
\FOR{$k \in \{0, \dots, K-1\}$}
\WHILE{not converged}
  \STATE Sample $\rmX_0^j\sim\pi$ and $\sigma^j\in\R$ from $\left[\sigma_{\rm{min}},
  \sigma_{\rm{max}}\right]$ for $j \in \{0,\dots,M-1\}$.
  \STATE Compute trajectories $\{\rmX_{i}^j\}_{i,j = 0}^{M-1,N-1}$ via (\ref{eq:eulermaruyama}) using $f(x) = v_t^{\theta^{k}}(x)$.
  \STATE Do gradient step on $\theta^{k+1/2}$ using (\ref{eq:loss}).
  \STATE Let $v_t^{\theta_{k+1}} = \frac{1}{2}(v_t^{\theta_k} + v_t^{\theta_{k+1/2}})$.
  \ENDWHILE
  \ENDFOR
  \STATE \textbf{Output:} $v_t^{\theta^{\star}}$
}
\def\eqref#1{equation~\ref{#1}}
\def\1{\bm{1}}
\def\rmI{{\mathbf{I}}}
\def\rmW{{\mathbf{W}}}
\def\rmX{{\mathbf{X}}}
\def\rmY{{\mathbf{Y}}}
\def\rmZ{{\mathbf{Z}}}
\def\mI{{\bm{I}}}
\DeclareMathAlphabet{\mathsfit}{\encodingdefault}{\sfdefault}{m}{sl}
\SetMathAlphabet{\mathsfit}{bold}{\encodingdefault}{\sfdefault}{bx}{n}
\def\gA{{\mathcal{A}}}
\def\gB{{\mathcal{B}}}
\def\gJ{{\mathcal{J}}}
\def\gN{{\mathcal{N}}}
\def\sD{{\mathbb{D}}}
\def\sP{{\mathbb{P}}}
\def\sQ{{\mathbb{Q}}}
\def\sS{{\mathbb{S}}}
\newcommand{\pdata}{p_{\rm{data}}}
\newcommand{\pprior}{p_{\rm{prior}}}
\newcommand{\rmd}{{\rm{d}}}
\newcommand{\E}{\mathbb{E}}
\newcommand{\R}{\mathbb{R}}
\newcommand{\KL}{D_{\mathrm{KL}}}
\DeclareMathOperator*{\argmin}{arg\,min}
\begin{document}

\maketitle

% REQUIRED
\begin{abstract}
  Resampling from a target measure whose density is unknown is a fundamental problem in mathematical statistics and machine learning. A setting that dominates the machine learning literature consists of learning a map from an easy-to-sample prior, such as the Gaussian distribution, to a target measure. Under this model, samples from the prior are pushed forward to generate a new sample on the target measure, which is often difficult to sample from directly.   A related problem of particular interest is that of generating a new sample proximate to or otherwise conditioned on a given input sample. In this paper, we propose a new model called the \emph{mirror bridge} to solve this problem of conditional resampling. Our key observation is that solving the Schr\"odinger bridge problem between a distribution and itself provides a natural way to produce new samples, giving in-distribution variations of an input data point. We demonstrate how to efficiently estimate the solution of this largely overlooked version of the Schr\"odinger bridge problem. 
  We show that our proposed method leads to significant algorithmic simplifications over existing alternatives, in addition to providing control over in-distribution variation. Empirically, we demonstrate how these benefits can be leveraged to produce proximal samples in a number of application domains.
\end{abstract}

% REQUIRED
\begin{keywords}
  entropic optimal transport, Schr\"odinger bridges, statistical estimation
\end{keywords}

% REQUIRED
\begin{AMS}
  49N99
\end{AMS}

\section{Introduction}
\label{sec:introduction}

Mapping one probability distribution to another is a central technique in mathematical statistics and machine learning. Myriad computational tools have been proposed for this critical yet often challenging task. Models and techniques for optimal transport provide one class of examples, where methods like the Hungarian algorithm \cite{kuhn1955hungarian} map one distribution to another with optimal cost.  Adding entropic regularization to the static optimal transport problem yields efficient algorithms like Sinkhorn's method \cite{deming1940marginal,sinkhorn1964matrices}, which have been widely adopted in machine learning since their introduction by \cite{cuturi2013sinkhorn}. Static entropy-regularized optimal transportation is equivalent to a dynamical formulation known as the \emph{Schr\"odinger bridge problem} \cite{schrodinger1932bridge,leonard2014measures}, which has proven useful to efficiently compute an approximation of the optimal map paired with an interpolant between the input measures.

Inspired by these mathematical constructions and efficient optimization algorithms, several methods in machine learning rely on learning a map from one distribution to another. Beyond optimal transport, diffusion models, for instance, learn to reverse a diffusion process that maps data to a noisy prior. Special attention has been given to learning methods that accomplish this in a \emph{stochastic} manner, i.e., modeling the forward noising process using a stochastic differential equation (SDE). 

The most common learning applications of distribution mapping attempt to find a map from a simple prior distribution to a complex data distribution, either using a score-matching strategy \cite{song2019gradients, ho2020denoising, song2021scorebased} or leveraging a formulation of the Schr\"odinger bridge problem \cite{bortoli2021diffusion,shi2022conditional,shi2023diffusion,zhou2024denoising}; other learning applications map one complex data distribution to another \cite{cuturi2013sinkhorn,courty2017adaptation}. 

Rather than mapping a simple prior to a complex data distribution, in this paper we instead tackle the understudied problem of mapping a probability distribution to \emph{itself}, that is, finding a joint distribution whose marginals are both the same data distribution $\pi$. %\justin{We call such a joint distribution a \emph{mirror bridge} between a measure and itself...} % Here, we are only referring to joint distributions with the same marginals (not SBP yet).
This task might seem inane at first glance, since two simple couplings satisfy our constraints: %to consider with the same marginal constraints $\rho$: 
one is the independent coupling $p(x,y)=\pi(x)\pi(y)$, and the other is the ``diagonal'' map given by $p(x,y)=\pi(x)\delta_{y}$. The space of couplings between a measure and itself, however, is far richer than these two extremes and includes models whose conditional distributions are neither identical nor Dirac measures.

%\begin{wrapfigure}{l}{\textwidth}
\begin{figure*}[t]
    \centering
    \includegraphics[width=\linewidth]{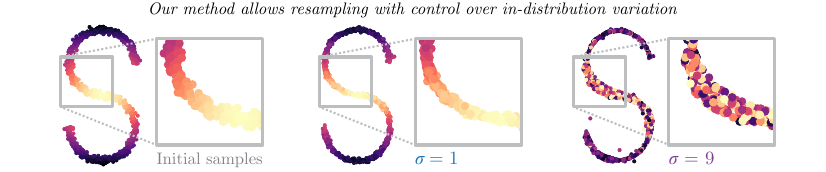}
    \vspace{-0.2in}
    \caption{We visualize the resampling of a 2D distribution obtained with our method for different values of noise $\sigma$. Higher noise value results in greater in-distribution sample variance.}
    \label{fig:teaser}
    \vspace{-0.1in}
\end{figure*}
%\end{wrapfigure}

We focus on the class of self-maps obtained by entropy-regularized transport from a measure to itself. Formally, we define a \emph{mirror Schr\"odinger bridge} to be the minimizer of the KL divergence $\KL ( \sP \;\Vert\; \sP^0 )$ over path measures $\sP$ with both initial and final marginal distributions equal to $\pi$, where $\sP^0$ is an Ornstein--Uhlenbeck process with noise $\sigma$. Mirror Schr\"odinger bridges are the stochastic counterpart to minimizing $\KL \left( p \;\Vert\; p^{0} \right)$, where $p^{0}$ is the probability density of the joint distribution associated with the path measure $\sP^0$, over the joint distributions $p$ on $\R^n \times \R^n$ satisfying the linear constraints $\int p(x,y) dy = \pi(x)$ and $\int p(x,y) dx =\pi (y)$. While the former minimizes the Kullback-Leibler divergence on path space, the latter is a minimization over density couplings. 

Despite its simplicity, the mirror case of the Schr\"odinger bridge problem suggests a rich application space. Couplings with identical marginals have proven useful to enhance model accuracy in vision and natural language processing by reinterpreting attention matrices as transport plans \cite{sander2022sinkformers}. Few works, however, consider this task from the perspective of optimizing over path measures or provide control over the entropy of the matching at test time. \cite{albergo2023stochastic} propose a stochastic interpolant between a distribution and itself, but their interpolants are not minimal in the relative entropy sense. %\justin{maybe remove rest of this sentence} and do not solve the Schr\"odinger bridge problem, even with optimization. 
Such minimal interpolants are those with minimal kinectic energy, and in applications, minimizing the kinectic energy of a path has been correlated to faster sampling \cite{shaul2023kinectic}.

\subsection*{Contributions} We investigate the mirror Schr\"odinger 
bridge problem and demonstrate how it can be leveraged to obtain in-distribution variants of a given input sample. In particular, given a sample $x_0\sim p_\text{data}$, we build an estimate for the stochastic process $\{\rmX_t\}_{t \in [0,1]}$ with minimal relative entropy under which the sample $x_0$ arrives at some $x_1 \sim p_\text{data}$ with $x_1$ proximal but not identical to $x_0$. We call our estimate a \emph{mirror bridge}. 
Although this estimate might not be the mirror Schr\"odinger bridge, we prove that under mild conditions the former is a good approximation of the latter in an explicitly quantifiable way. 
Furthermore, we demonstrate that our algorithm for obtaining mirror bridges has computational advantages over alternatives used to compute Schr\"odinger 
bridges. %\justin{Although this estimate might not be the mirror Schr\"odinger bridge, it is ...}

Our contributions are twofold. First, on the theoretical side, we use the time symmetry of the mirror Schr\"odinger % \leti{this is referring to the actual Schr\"odinger bridge (not our approx.)}
bridge to express it as the limit of iterates produced 
by %via 
a symmetrized version of the %\justin{capitalize?}
Iterative Proportional Fitting Procedure (IPFP). We show 
%demonstrate 
that this scheme admits a convenient first-order approximation that dramatically reduces computational expense in practice. Our mirror bridge is the limit of the latter procedure, and we provide an error bound to justify it as a good approximation.  %\justin{and it still converges to something, and that something is a good approximation?} 
Second, in applications, the implementation of our method allows for sampling from the conditional distribution $\rmX_1 \mid \rmX_0 = x_0$, allowing us to 
%in such a way that we can 
control how proximal a generated sample $x_1$ is relative to the input sample $x_0$.

\section{Related Works}

In this section, we place our method in context of related work on solving the Schr\"odinger bridge problem.

\label{sec:related}
\paragraph*{Entropy regularized optimal transport} A few recent works employ the idea of a coupling with the same marginal constraints. The works \cite{feydy2019interpolating, mensch2019geometric} use \emph{static} entropy-regularized optimal transportation from a distribution to itself to build a cost function correlated to uncertainty. The work \cite{sander2022sinkformers} reinterprets attention matrices in transformers as transport plans from a distribution to itself, while \cite{agarwal2024iterated} analyzes this reinterpretation in the context of gradient flows. Also relevant is the work \cite{kurras2015symmetric}, which shows that, over discrete state spaces, Sinkhorn's algorithm can be simplified in the case of identical marginal constraints. In contrast, we study the same-marginal setting from the viewpoint of \emph{dynamical} Schr\"odinger bridges, motivated by the conditional resampling
problem introduced in \S\ref{sec:introduction}: given an input sample $x_0\sim \pi$, we wish to efficiently generate a new sample $x_1\sim \pi$ that remains controllably close to $x_0$. While a static coupling does define a conditional law of $x_1$ given $x_0$, turning that into an efficient sampler is nontrivial in the high-dimensional domains we target. This is precisely where the dynamical formulation shines, as it yields a diffusion process whose forward simulation \emph{is} the sampler:  starting from $x_0$, we simply run the learned bridge forward in time and output $x_1$. 

\paragraph*{Expectation maximization} Our method can be categorized under the umbrella of expectation maximization algorithms, drawing from the theory of information geometry. A number of recent papers introduce related formulations to machine learning; most relevant to us are the works of \cite{brekelmans2023expectation,vargas2023transport,vargas2024transport}. These works, however, focus on finding a path measure with distinct marginal constraints, overlooking the potential application to resampling and algorithmic simplifications obtained for the case in which the marginal constraints are the same.

\paragraph*{Schr\"odinger bridges and stochastic interpolants} Schr\"odinger bridges have been used to obtain generative models by flowing samples from a prior distribution to an empirical data distribution from which new data is to be sampled. Several methods have been proposed to this end: the works \cite{bortoli2021diffusion,vargas2021likelihood} iteratively estimate the drift of the SDE associated with the diffusion processes of half-bridge formulations. While the first uses neural networks and score matching, the latter employs Gaussian processes. From these, a number of extensions or alternatives have been presented; most relevant are \cite{peluchetti2023mixture,shi2023diffusion}, which extend \cite{bortoli2021diffusion} but differ with respect to the projection sets used to define their half-bridge formulations, and \cite{pooladian2025plugin}, which derives a way to estimate the drift of the Schr\"odinger bridge from the corresponding static coupling. Schr\"odinger-bridge-based methods alleviate the computational expense incurred by score-based generative models (SGM) \cite{bortoli2021diffusion}, which require the forward diffusion process to run for longer times with smaller step sizes. Unlike SGM, our method provides a tool to flow an existing sample in the same data distribution with control over the spread of the newly obtained sample.

To the best of our knowledge, the work  \cite{albergo2023stochastic} is the only one in the literature on generative modeling that 
considers 
maps from a distribution to itself. In their paper, flow matching learns a drift function associated with a stochastic path from the data distribution to itself. Their stochastic interpolants, however, do not attempt to seek optimality in the relative entropy sense, a property correlated to sampling effectiveness and generation quality \cite{shaul2023kinectic}. By contrast, our method seeks to estimate the coupling with minimal relative entropy, similar in spirit to methods such as \cite{bortoli2021diffusion,shi2023diffusion,vargas2021likelihood}; our method, however, presents certain algorithmic advantages over these, which can only be derived for the mirror case.

\section{Mathematical Preliminaries}
\label{sec:preliminaries}

In this section we recall the formulation of the Schr\"odinger bridge problem on path space, specializing to the case of diffusion processes. We then review the iterative proportional fitting procedure (IPFP) and its implementation in terms of drift functions. This will provide the basic probabilistic and analytic tools needed for our time-symmetric formulation in Sections~\ref{sec:method}--\ref{sec:symmipf}.

Throughout, we write $\R^n$ for $n$-dimensional Euclidean space, and we use $\|\cdot\|_2$ for the usual Euclidean norm. For any probability measure $\sP$ on a measurable space, we write $\E_{\sP}[\cdot]$ to denote the expectation with respect to $\sP$.

\subsection{Definitions} Let $n > 0$ be an integer, and let $C([0,1],\R^n)$ denote path space, i.e., the space of continuous functions from $[0,1]$ to $\R^n$. We write $\mathcal{P}(C([0,1],\R^n))$ for the space of Borel probability measures on the path space $C([0,1],\R^n)$; its elements are known as \emph{path measures}. For each $t\in[0,1]$, we define the \emph{time-evaluation map}
$${\rm eval}_t : C([0,1],\R^n)\to\R^n,\qquad {\rm eval}_t(\omega)\coloneqq \omega(t).$$
Given a path measure $\sP \in \mathcal{P}(C([0,1],\R^n))$, we denote by
$$\sP_t \coloneqq ({\rm eval}_t)_{\#}\sP$$
its time-$t$ marginal, i.e., the pushforward of $\sP$ under ${\rm eval}_t$. Thus $\sP_t$ is a probability measure on $\R^n$. When $\sP_t$ is absolutely continuous with respect to Lebesgue measure, we can make sense of its corresponding probability density function, which we denote by $\rho_t^{\sP}$.

Next, let $\sP^0 \in \mathcal{P}(C([0,1],\R^n))$ be a fixed \emph{reference path measure}. Following \cite{benton1975Markov,leonard2014survey}, we formulate the \emph{Schr\"odinger bridge problem} as %to be 
the problem of finding a path measure $\sP_{\rm{SB}} \in \mathcal{P}(C([0,1],\R^n))$ interpolating between prescribed initial and final marginals $\pi_0$ and $\pi_1$ that is the closest to  %the reference measure 
$\sP^0$ with respect to 
%the 
Kullback--Leibler divergence $\KL$. To be precise, given probability measures $\pi_0$ and $\pi_1$ on $\R^n$, we define the constraint set
$$\sD(\pi_0,\pi_1) \coloneqq \big\{ \sP \in \mathcal{P}(C([0,1],\R^n)) : \sP_0 = \pi_0,\ \sP_1 = \pi_1 \big\}.$$
For $\sP,\sQ \in \mathcal{P}(C([0,1],\R^n))$ with $\sP$ absolutely continuous with respect to $\sQ$, we write
$$\KL(\sP \;\Vert\; \sQ) \coloneqq \int_{C([0,1],\R^n)} \log\Big(\frac{\mathrm d\sP}{\mathrm d\sQ}\Big)\,\mathrm d\sP$$
for the Kullback--Leibler (KL) divergence between $\sP$ and $\sQ$, and we set $\KL(\sP\;\Vert\;\sQ) = +\infty$ otherwise. With this notation, the Schr\"odinger bridge $\sP_{\rm{SB}}$ is defined as the solution of the optimization problem
\begin{equation} \label{eq:sbp}
    \sP_{\rm{SB}}\coloneqq\argmin_{\sP\in\sD(\pi_0,\pi_1)} \KL\left( \sP\;\Vert\;\sP^0\right).
\end{equation}
In other words, $\sP_{\rm{SB}}$ is the \emph{direct KL projection} of $\sP^0$ onto the constraint set $\sD(\pi_0,\pi_1)$. In the terminology of \cite{leonard2014survey}, the formulation (\ref{eq:sbp}) is the \emph{dynamic} Schr\"odinger bridge problem, in contrast with the corresponding \emph{static} formulation posed directly on couplings between the endpoint marginals $\pi_0$ and $\pi_1$.

In this paper we consider path measures that arise as the laws of diffusion processes. A diffusion process is a stochastic process $(\rmX_t)_{t\in[0,1]}$ taking values in $\R^n$ and governed by a forward stochastic differential equation (SDE) of the form
$$\rmd\rmX_t=f_t(\rmX_t) \rmd t+\sigma \rmd \rmW_t,\qquad t\in[0,1],$$
where $f_t:\R^n\to\R^n$ is the \emph{forward drift} at time $t$, $\sigma > 0$ is the noise coefficient, and $(\rmW_t)_{t \in [0,1]}$ is a standard $n$-dimensional Wiener process. Under standard regularity assumptions on $f_t$, e.g., global Lipschitz continuity and linear growth, specifying the initial law $\rmX_0 \sim \pi_0$ uniquely determines the law of the process on $C([0,1],\R^n)$ and hence a path measure $\sP \in \mathcal{P}(C([0,1],\R^n))$. In this case, we use a superscript $\sP$ as in $f_t^{\sP}$ to denote the forward drift of the corresponding stochastic process.

An important feature of diffusion processes is that their time-reversals are again diffusion processes with the same noise coefficient $\sigma$. More precisely, let $(\rmX_t)_{t\in[0,1]}$ solve the forward SDE above, and define the time-reversed process $(\rmY_t)_{t\in[0,1]}$ by $\rmY_t \coloneqq \rmX_{1-t}$ for $t\in[0,1]$. 
Then the family of random variables $(\rmY_t)_{t\in[0,1]}$ has the same joint law as $(\rmX_{1-t})_{t\in[0,1]}$, and $\rmY$ solves a backward SDE of the form
$$\rmd\rmY_t = b_t(\rmY_t) \rmd t + \sigma \rmd \rmW_t,$$
here $b_t:\R^n\to\R^n$ is the \emph{backward drift} at time $t$; see, e.g., \cite[\S2.3]{winkler2023score} for a detailed discussion. We write $b_t^{\sP}$ to denote the backward drift of the stochastic process with forward drift $f_t^{\sP}$. The forward and backward drifts $f_t^{\sP}$ and $b_t^{\sP}$ are related by the identity known as Nelson's relation:
\begin{equation} \label{eq:nelson}
f_t^{\sP}(x) - b_{1-t}^{\sP}(x) = \sigma^2 \nabla \log \rho_t^{\sP}(x)
\end{equation}
for all $t\in[0,1]$ and all $x \in \R^n$ such that $\rho_t^{\sP}(x)>0$.

Given a drift field $v_t : \R^n \to \R^n$ and an initial density $\rho_0^v = \pi_0$, we write $\rho_t^v$ for the time-$t$ marginal density of the diffusion process with drift $v_t$ and noise coefficient $\sigma$. The marginals $\rho_t^v$ evolve according to a linear partial differential equation called the \emph{Fokker--Planck} equation. This equation expresses conservation of probability mass under the combined effect of drift and diffusion and is stated as follows:
\begin{equation} \label{eq:fp}
\partial_t \rho_t^v = - \nabla \cdot (\rho_t^v v_t) + \frac{\sigma^2}{2} \Delta \rho_t^v,\qquad \rho_0^v = \pi_0.
\end{equation}
When $\sP$ is the law of the diffusion with forward drift $v_t$ and initial density $\pi_0$, we write $\rho_t^{\sP} = \rho_t^v$.

Finally, throughout this paper we assume that the reference path measure $\sP^0$ in (\ref{eq:sbp}) arises as the law of a diffusion process with noise coefficient $\sigma$. Under this assumption, any path measure $\sP$ with finite KL divergence with respect to $\sP^0$, including the Schr\"odinger bridge $\sP_{\rm{SB}}$, necessarily arises from a diffusion process with the same noise coefficient $\sigma$ \cite{leonard2014measures,vargas2021likelihood}. Consequently, the optimization in (\ref{eq:sbp}) may be restricted to diffusion path measures without loss of generality. Moreover, by adjusting the initial condition of the reference SDE, we may assume that $\sP^0_0 = \pi_0$, i.e., that the reference process has the desired initial marginal, without changing the solution of (\ref{eq:sbp}).

\subsection{Iterative Proportional Fitting Procedure} \label{sec:ipfp}

The typical strategy for solving the problem (\ref{eq:sbp}) is to apply a general technique known as the \emph{Iterative Proportional Fitting Procedure} (IPFP) \cite{fortet1940res,kullback1968probabilities}. Given initial and final marginals $\pi_0$ and $\pi_1$, IPFP constructs a sequence of path measures $(\sP^k)_{k\ge 0}$ by iteratively solving the pair of \emph{half-bridge} problems
\begin{equation} \label{eq:half-bridges}
%\begin{split}
\sP^{2k+1}=\argmin_{\sP\in\sD(\cdot,\pi_1)}\KL\left( \sP\;\Vert\;\sP^{2k} \right),\quad\sP^{2k+2}=\argmin_{\sP\in\sD(\pi_0,\cdot)}\KL\big( \sP\;\Vert\;{\sP^{2k+1}}\big),    
%\end{split}
\end{equation}
where $\sD(\cdot,\pi_1)$ (resp., $\sD(\pi_0,\cdot)$) 
%where $\sD(\cdot,\pi_1)$, respectively, $\sD(\pi_0,\cdot)$, 
denotes the space of path measures with final (resp., initial) marginal 
%fixed to be 
$\pi_1$ (resp., $\pi_0$). 
Intuitively, each update in (\ref{eq:half-bridges}) performs a direct KL projection onto one of the constraint sets while staying as close as possible, in the sense of KL divergence, to the previous iterate. IPFP can be thought of as an extension of Sinkhorn's algorithm to continuous state spaces, where the rescaling updates characteristic of Sinkhorn are replaced by iterated direct KL projections onto sets of distributions with fixed initial or final marginal \cite{essid2019trav}.

Direct KL projections satisfy a Pythagorean theorem, from which it follows that the IPFP iterates converge to the Schr\"odinger bridge. More precisely, we have the following result:

\begin{proposition}[\protect{\cite{ruschendorf1995iterative}}]
    The sequence $(\sP^k)_{k\ge 0}$ defined by $($\ref{eq:half-bridges}$)$ converges in total variation to the Schr\"odinger bridge $\sP_{\rm{SB}}$ solving $($\ref{eq:sbp}$)$.
\end{proposition}

In the setting of diffusion processes, it is convenient to express the IPFP updates in terms of drift functions rather than path measures, as drift functions are more amenable to modeling and estimation than path measures. To this end, let $\sP \in \sD(\pi_0,\cdot)$ and $\sP^\dagger \in \sD(\cdot,\pi_1)$ be path measures corresponding to diffusion processes with the same noise coefficient $\sigma$. We write $f_t^{\sP}$ and $b_t^{\sP}$ for the forward and backward drifts associated with $\sP$, and similarly $f_t^{\sP^\dagger}$ and $b_t^{\sP^\dagger}$ for $\sP^\dagger$. As a consequence of Girsanov's theorem, the KL divergence between $\sP$ and $\sP^\dagger$ admits the following representation in terms of these drift functions, see e.g., \cite[\S3]{Chen2016control} and \cite[\S2.2, \S2.3]{winkler2023score}:
\begin{align}
\KL(\sP \;\Vert\; \sP^\dagger) &= \KL\big(\sP_{0} \;\Vert\; \sP^\dagger_{0}\big) + \frac{1}{2\sigma^2}\int_0^1 \E_{\sP}\left[\big\|f_t^{\sP}(\rmX_t) - f_t^{\sP^\dagger}(\rmX_t)\big\|_2^2\right]  \rmd t \label{eq:klviadrift1}\\
&= \KL\big(\sP_{1} \;\Vert\; \sP^\dagger_{1}\big) +  \frac{1}{2\sigma^2}\int_0^1 \E_{\sP}\left[\big\|b_t^{\sP}(\rmX_t) - b_t^{\sP^\dagger}(\rmX_t)\big\|_2^2\right]  \rmd t, \label{eq:klviadrift2}
\end{align}
where $(\rmX_t)_{t\in[0,1]}$ denotes the diffusion process with law $\sP$. The identities (\ref{eq:klviadrift1})--(\ref{eq:klviadrift2}) show that, up to boundary terms at $t=0$ or $t=1$, the KL divergence between diffusion path measures can be expressed as a time-integral of squared drift differences.

We begin with the direct projection onto the set of path measures with a prescribed initial marginal, which is characterized in terms of drift functions as follows:

\begin{proposition} \label{prop:directmarginal}
Let $\sP^\dagger \in \sD(\cdot,\pi_1)$ be a diffusion path measure with forward drift $f_t^{\sP^\dagger}$. The direct $\KL$ projection of $\sP^\dagger$ onto the space $\sD(\pi_0,\cdot)$ is the unique path measure $\sP \in \sD(\pi_0,\cdot)$ whose forward drift satisfies $f_t^{\sP} = f_t^{\sP^\dagger}$ for all $t\in[0,1]$.
\end{proposition}

In other words, to project onto $\sD(\pi_0,\cdot)$ we simply replace the initial marginal of $\sP^\dagger$ by $\pi_0$ while keeping its forward drift unchanged. In IPFP (\ref{eq:half-bridges}), one applies Proposition~\ref{prop:directmarginal} at each odd step by taking $\sP = \sP^{2k+1} \in \sD(\pi_0,\cdot)$ to have forward drift equal to that of $\sP^\dagger = \sP^{2k}$.

Given a probability distribution $\pi$ on $\R^n$, let $v_t \mapsto \gJ_\pi[v_t]$ denote the operator that maps a forward drift $v_t$ to the corresponding backward drift of the induced diffusion process with initial marginal given by $\pi$. For instance, $\gJ_\pi[v_t]$ can be characterized in terms of Nelson's relation (\ref{eq:nelson}), where the marginal densities $\rho_t^v$ are governed by the Fokker--Planck equation (\ref{eq:fp}). Writing $f_t^{2k}$ and $b_t^{2k}$ for the forward and backward drifts associated with $\sP^{2k}$, the IPFP iterates (\ref{eq:half-bridges}) may be expressed purely in terms of drifts as
\begin{equation} \label{eq:half-bridges-drifts}
b_t^{{2k+1}} =  \gJ_{\pi_1}\big[f_t^{{2k}}\big],\qquad 
f_t^{{2k+2}} = \gJ_{\pi_0}\big[b_t^{{2k+1}}\big].
\end{equation}
Consequently, IPFP alternates between computing a backward drift from a forward drift with prescribed initial marginal, and computing a forward drift from a backward drift with prescribed final marginal.

\subsection{Applications} 

The Schr\"odinger bridge framework provides a flexible tool for generative modeling. Suppose $\pi_0$ is given by a data distribution $\pdata$, and let $\pi_1$ be an easy-to-sample distribution $\pprior$, for example a standard Gaussian distribution $\gN(0,\mI)$. The backward diffusion process associated with $\sP_{\rm{SB}}$ then yields a generative model for sampling from $\pdata$: one first samples from $\pprior$ at time $t=1$ and then integrates the backward SDE to obtain approximate samples from $\pdata$ at time $t=0$.

In practice, the IPFP iterates (\ref{eq:half-bridges}) can be approximated by an algorithm known as the \emph{Diffusion Schr\"odinger Bridge} (DSB), developed in \cite{bortoli2021diffusion}, which leverages Proposition \ref{prop:directmarginal} to train neural networks $f_t^\theta$ and $b_t^\phi$ to estimate the forward and backward drifts of the Schr\"odinger bridge $\sP_{\rm{SB}}$.  In the remainder of the paper, we will specialize to a time-symmetric setting $\pi_0=\pi_1$ with a time-symmetric reference path measure $\sP^0$. DSB does not exploit this special structure: the algorithm still learns two separate drift networks $f_t^\theta$ and $b_t^\phi$ and alternates between forward and backward updates as in the general case. Moreover, in practice, we find that the estimate of the Schr\"odinger bridge obtained by DSB fails to respect the time symmetry induced by the symmetric choice of marginal data and reference process.

In \S\ref{sec:method}, we will formulate a symmetrized version of IPFP; this reformulation leads to a symmetrized version of DSB that encourages time symmetry, uses only one neural network, and enjoys substantial computational savings, as we explain in \S\ref{sec:symmipf}.

\section{Mirror Schr\"odinger Bridges}
\label{sec:method}

We now specialize the Schr\"odinger bridge problem to what we call the \emph{mirror} setting, where both the initial and final marginals $\pi_0$ and $\pi_1$ are equal to a given target distribution $\pi$ and where the reference path measure is symmetric in time. Our goal in this section is to formalize the notion of time-symmetry for path measures, define the mirror Schr\"odinger bridge, and record  basic structural properties that will be crucial in \S\ref{sec:symmipf}.

Retain the setting and notation of \S\ref{sec:preliminaries}. We take $\pi_0 = \pi_1 = \pi$, and we take the reference path measure $\sP^0$ to be time-symmetric in the sense made precise as follows:

\begin{definition}
    Let $R$ be the time-reversal involution on $C([0,1],\R^n)$ defined by $(R\omega)(t) \coloneqq \omega(1-t)$. Then a path measure $\sP \in \mathcal{P}(C([0,1],\R^n))$  is time-symmetric if and only if $\sP\circ R = \sP$, i.e., if $\sP_t = \sP_{1-t}$. We denote the set of time-symmetric path measures by
$$\sS \coloneqq \big\{\sP \in \mathcal{P}(C([0,1],\R^n)) : \sP\circ R = \sP\big\}.$$
\end{definition}

When $\sP$ arises as the law of a diffusion process $(\rmX_t)_{t \in [0,1]}$, the notion of time-symmetry has a simple dynamical interpretation in terms of its forward and backward drifts $f_t^\sP$ and $b_t^\sP$. Indeed, we have the following basic result:

\begin{lemma} \label{lem:timesymmetry}
    If $\sP \in \sS$ is the law of a diffusion process, then for almost every $t \in [0,1]$, we have $f_t^{\sP}(x) = b_t^{\sP}(x)$ for $x$ in a set of full measure under $\sP_t$. In particular, if $\rho_t^{\sP}$ is smooth and strictly positive and $f_t^{\sP}$ and $b_t^{\sP}$ are continuous, then $f_t^{\sP}$ and $b_t^{\sP}$ agree everywhere.
\end{lemma}
\begin{proof}
Let $\rmX$ be a diffusion process with law $\sP$ and let $\widetilde{\rmX}$ be its time-reversal defined by $\widetilde{\rmX}_t  = \rmX_{1-t}$ for all $t \in [0,1]$. Time-symmetry of $\sP$ implies that the laws of $\rmX$ and $\widetilde{\rmX}$ coincide. Moreover, the backward drift $b_t^{\sP}$ of $\rmX$ is the forward drift of $\widetilde{\rmX}$. On the other hand, the infinitesimal generator $L_{t,f}$ of $\rmX$ acts on smooth and compactly supported test functions $\phi$ via
 \begin{equation} \label{eq:gen1}
 (L_{t,f} \phi)(x) = f_t^{\sP}(x) \cdot \nabla \phi(x) + \frac{\sigma^2}{2}\Delta \phi(x),
 \end{equation}
 whereas the infinitesimal generator $L_{t,b}$ of $\widetilde{\rmX}$ acts via
  \begin{equation} \label{eq:gen2}
  (L_{t,b} \phi)(x) = b_t^{\sP}(x) \cdot \nabla \phi(x) + \frac{\sigma^2}{2}\Delta \phi(x).
  \end{equation}
 Since $\rmX$ and $\widetilde{\rmX}$ have the same law, their generators must agree on smooth and compactly supported functions, so comparing (\ref{eq:gen1}) and (\ref{eq:gen2}) yields that $f_t^{\sP} = b_t^{\sP}$ on a set of full $\sP_t$-measure for almost every $t$. Under the additional regularity assumptions stated in the lemma, the equality extends to all $x\in\R^n$. Alternatively, under such regularity assumptions the equality $f_t^{\sP} = b_t^{\sP}$ follows directly from Nelson's relation (\ref{eq:nelson}).
\end{proof}
We conclude from Lemma~\ref{lem:timesymmetry} that for a time-symmetric path measure arising from a diffusion process, the forward and backward drifts coincide almost everywhere under the path measure. This observation will play a central role in our approach to solving the Schr\"odinger bridge problem in the mirror setting. This problem is stated as follows:
\begin{definition}
The mirror Schr\"odinger bridge is the solution $\sP_{\rm{MSB}}$ to the Schr\"{o}dinger bridge problem with identical initial and final marginals $\pi_0 = \pi_1 = \pi$ with respect to a time-symmetric reference measure $\sP^0$:
\begin{equation} \label{eq:msb}
    \sP_{\rm{MSB}}\coloneqq\argmin_{\sP\in\sD(\pi,\pi)} \KL\left( \sP\;\Vert\;\sP^0\right).
\end{equation}
\end{definition}
In our applications we typically take $\sP^0$ to be the law of an Ornstein--Uhlenbeck process $(\rmX_t)_{t\in[0,1]}$ in stationarity, i.e., the solution of the SDE
$$\rmd\rmX_t=-\alpha\rmX_t \rmd t+\sigma\rmd \rmW_t,\qquad \alpha>0,$$
with initial law given by its stationary Gaussian distribution. This is a particularly common choice of reference process in the literature, and it has the special property that it is reversible, in the sense that its law is invariant under time-reversal and its stationary marginal is invariant under the dynamics. In particular, it is time-symmetric, so $\sP^0 \in \sS$. More generally, we could take $\sP^0$ to be the law of any reversible diffusion process with stationary marginal $\pi$, although such processes form a strict subclass of $\sS$. In contrast, standard Brownian motion on $\R^n$ is not reversible with respect to a fixed marginal and does not yield a time-symmetric path measure in the sense above, so it does not fit into our mirror setting.

As one might expect, the mirror Schr\"odinger bridge inherits the time-symmetry of the reference process $\sP^0$:

\begin{lemma} \label{lem:mirrorsymmetry}
    We have that $\sP_{\rm{MSB}} \in \sS$.
\end{lemma}
\begin{proof}
     Since $\sP^0$ is time-symmetric, we have $\sP^0 \circ R = \sP^0$. By the definition of KL divergence on path space, the Schr\"odinger bridge associated with the pair $(\pi,\pi)$ and reference $\sP^0\circ R$ is $\sP_{\rm{MSB}}\circ R$. But $\sP^0\circ R = \sP^0$, so by uniqueness of the Schr\"odinger bridge we must have $\sP_{\rm{MSB}} = \sP_{\rm{MSB}} \circ R$, which is precisely the statement that $\sP_{\rm{MSB}} \in \sS$.
\end{proof}
Combining Lemmas \ref{lem:timesymmetry} and \ref{lem:mirrorsymmetry}, we see that the forward and backward drifts of the diffusion process associated with $\sP_{\rm{MSB}}$ are almost surely equal under $\sP_{\rm{MSB}}$, with exact equality under mild regularity assumptions. For ease of notation, we denote this common drift function by $v_t^*$, which we understand to represent both the forward and the backward drift of $\sP_{\rm{MSB}}$.

A na\"ive approach to solving the mirror Schr\"odinger bridge problem (\ref{eq:msb}) is to apply IPFP with both marginals $\pi_0=\pi_1$  equal to $\pi$.  In practice, this requires iterative training of two neural networks $f_t^\theta$ and $b_t^\phi$, the first modeling the drift of the forward process associated to $\sP_{\rm{MSB}}$ and the latter modeling the corresponding backward drift. But this straightforward application of IPFP leads to unnecessary computational expense, as it fails to use the time-symmetry of the mirror problem (\ref{eq:msb}). To our knowledge, no prior approach has systematically leveraged symmetry in the dynamical formulation at the level of path measures and drifts. Our goal in the next section is to reformulate the IPFP iterations so as to operate directly on a single drift function and to modify the updates to explicitly encourage time-symmetry.

\section{Symmetrized Iterative Proportional Fitting}
\label{sec:symmipf}

Recall from \S\ref{sec:ipfp} that classical IPFP alternates between two distinct KL projections, one onto the set of path measures with prescribed initial marginal $\pi_0$, and one onto the set of path measures with prescribed final marginal $\pi_1$. In the language of drifts, these two updates can be expressed in terms of the operators $\gJ_{\pi_0}$ and $\gJ_{\pi_1}$, as in (\ref{eq:half-bridges-drifts}). In the mirror setting $\pi_0 = \pi_1 = \pi$, and the two projection steps are implemented by the same operator $\gJ_\pi$. Consequently, at the level of drifts, IPFP reduces in the mirror setting to performing the following fixed point iteration:
\begin{align}
    &v^{k+1}=\gJ_\pi\left[v^{k}\right] \label{eq:direct-kl-1}
\end{align}
where $v_t^0$ is the (forward and backward) drift of the time-symmetric reference process $\sP^0 \in \sS$. This suggests the strategy of modeling the drift with a single neural network and using the fixed point iteration (\ref{eq:direct-kl-1}) to recursively train that network. Even without further modification, this strategy already yields a significant simplification in the implementation of IPFP: because the inner iterations are the same, it suggests that the mirror Schr\"odinger bridge can be estimated in roughly half the number of training iterations as standard IPFP. 

Nevertheless, the map $v \mapsto \gJ_\pi[v]$ is not explicitly time-symmetric, even though its fixed point $v^*$ is. This motivates us to formulate a symmetrized version of (\ref{eq:direct-kl-1}) that directly promotes time-symmetry at each iteration. This encourages the learned dynamics to respect the time-symmetric structure of the true mirror Schr\"odinger bridge by damping out time-asymmetric components of the estimation error.

\subsection{Linearization at the Mirror Schr\"{o}dinger Bridge} \label{sec:linear}

Our key observation is that, at least to first order around the optimal drift $v^*$, symmetrization can be achieved by averaging the forward and backward drifts associated with a given diffusion process. The resulting method can be interpreted as a convenient first-order approximation of a fully symmetrized version of IPFP, in the spirit of the symmetrization procedure developed by Kurras \cite{kurras2015symmetric} to handle the mirror case of the static Schr\"odinger bridge problem.

To state our first-order approximation result, we require two preliminary ingredients. The first is to specify the functional analytic setting in which we will work. Let $(\rmX_t)_{t\in[0,1]}$ denote the diffusion process with law $\sP_{\rm MSB}$ and drift $v_t^*$, and let $\rho_t^* \coloneqq \rho_t^{v^*}$ denote its time-marginal densities. We consider drift perturbations $h_t:\R^n\to\R^n$ that are square-integrable along the trajectories of $\sP_{\rm MSB}$, and we equip this space with the norm
\begin{equation} \label{eq:drift-L2-norm}
\|h\|_2^2 \coloneqq \int_0^1 \E_{\sP_{\rm MSB}}\bigg[\|h_t(\rmX_t)\|_2^2\bigg]\,\rmd t
= \int_0^1 \int_{\R^n} \|h_t(x)\|_2^2 \,\rho_t^*(x)\,\rmd x\,\rmd t.
\end{equation}
The completion of smooth, compactly supported perturbations with respect to this norm is a Hilbert space, which we denote by $\gB$. Elements of $\gB$ will serve as admissible directions for linearizing quantities at $v^*$.

The second ingredient is to define the \emph{drift-averaging operator}
$$\gA(v) \coloneqq \frac{1}{2}\big(v + \gJ_\pi[v]\big).$$
By construction, the optimal drift $v^*$ is a fixed point of $\gA$, as it is a fixed point of $\gJ_\pi$. Having defined the space $\gB$ and the operator $\gA$, we are now in position to state the main result of this section, which characterizes the Fr\'{e}chet derivatives $J\coloneqq D\gJ_\pi[v^*]$ and $A \coloneqq D\gA[v^*]$ of the operators $\gJ_\pi$ and $\gA$ at the optimal drift $v^*$:

\begin{proposition} \label{prop:reorder}
The map $J \coloneqq D\gJ_\pi[v^*]$ is an involution, i.e., $J^2 = \mathrm{Id}$. Thus, the space $\gB$ decomposes as a topological direct sum $\gB = E_+ \oplus E_-$ of the $+1$ and $-1$ eigenspaces of $J$, denoted by $E_{\pm} \subset \gB$, and the map $A \coloneqq D\gA[v^*]$ is the projection onto $E_+$ along $E_-$.
\end{proposition}

As a consequence of the decomposition $\gB = E_+ \oplus E_-$, we can think of each element of $\gB$ as having a time-symmetric component, namely its $E_+$-component, and a time-antisymmetric component, namely its $E_-$-component. In this language, the derivative $A = D\gA[v^*]$ is the projection onto the time-symmetric component. Therefore, in a neighborhood of $v^*$, the nonlinear map $\gA$ acts to first order as a symmetry-enforcing transformation on drift functions.

\begin{proof}[Proof of Proposition \ref{prop:reorder}]
We proceed by perturbing the optimal drift $v^*$ and computing the resulting first-order change in the time-reversal update $\gJ_\pi$. For this purpose, we find it useful to split drift functions into their \emph{osmotic} and \emph{current} velocity components.
\begin{definition}
For a given drift $v \in \gB$ and the associated densities $(\rho^v_t)_{t\in[0,1]}$, the corresponding \emph{osmotic} and \emph{current} velocities are defined by
$$u_t^v \coloneqq \frac{\sigma^2}{2} \nabla \log \rho_t^v, \qquad w_t^v \coloneqq v_t - u_t^v.$$
\end{definition}

The osmotic velocity encodes the contribution of the density evolution to the drift, while the current velocity captures the remaining transport component. Now fix $h \in \gB$, viewed as a perturbation direction, and for small $\varepsilon$ consider the perturbed drift
$$v_t^\varepsilon \coloneqq v_t^* + \varepsilon h_t$$
with associated marginal probability density $\rho_t^\varepsilon \coloneqq \rho_t^{v^*+\varepsilon h}$. Then $\rho_t^\varepsilon$ solves the perturbed Fokker--Planck equation
\begin{equation} \label{eq:perturbed-FP}
\partial_t \rho_t^\varepsilon = - \nabla \cdot \big(\rho_t^\varepsilon (v_t^* + \varepsilon h_t)\big) + \frac{\sigma^2}{2} \Delta \rho_t^\varepsilon,\qquad \rho_0^\varepsilon = \pi.
\end{equation}
We define the first-order variation of the marginal density by
$$\eta_t \coloneqq \left.\frac{\mathrm d}{\mathrm d\varepsilon} \rho_t^\varepsilon \right|_{\varepsilon = 0}.$$
Differentiating (\ref{eq:perturbed-FP}) at $\varepsilon=0$ and expressing the result in terms of $\eta_t$ yields the following linear parabolic equation:
\begin{equation} \label{eq:linpar}
\partial_t \eta_t = -\nabla \cdot (\eta_t v_t^* + \rho_t^* h_t) + \frac{\sigma^2}{2}\Delta\eta_t,\qquad \eta_0 = 0,
\end{equation}
where $\rho_t^* \coloneqq \rho_t^{v^*}$ denotes the marginal density of the mirror Schr\"odinger bridge $\sP_{\rm{MSB}}$.

It is natural to measure the size of density perturbations in the $L^2$-norm
\begin{equation} \label{eq:density-L2-norm}
\|\eta\|_{2}^2 \coloneqq \int_0^1 \int_{\R^n} |\eta_t(x)|^2 \,\rho_t^*(x)\,\rmd x\,\rmd t,
\end{equation}
which endows the space of such perturbations the structure of Hilbert space. The next lemma records the basic regularity of the map $h \mapsto \eta$ of Hilbert spaces.
\begin{lemma} \label{lem:h-to-eta-bounded}
    The mapping $h_t \mapsto \eta_t$ defined by (\ref{eq:linpar}) is linear, and under mild regularity assumptions, it is bounded. More precisely, assume that $\rho_t^*(x)>0$ for all $t\in[0,1]$ and $x\in\R^n$, that there exist finite constants $M,B>0$ such that $\rho_t^*(x) \leq M$ and $\|\nabla\log\rho_t^*\|_{L^\infty} \leq B$, and that the drift $v_t^*$ has absolutely bounded norm. Further assume $\eta_t$ and $\nabla\eta_t$ are square-integrable with respect to $\rho_t^*(x)\,\rmd x$ for all $t$. Then there exists a constant $C>0$, depending only on $M$, $B$, $\sigma$, and $\|v^*\|_{L^\infty}$, such that $\|\eta\|_2 \leq C \| h\|_2$.
\end{lemma}
\begin{proof}[Proof of Lemma \ref{lem:h-to-eta-bounded}]
Observe that (\ref{eq:linpar}) is linear in both $\eta_t$ and $h_t$, which implies that the mapping $h_t \mapsto \eta_t$ is linear. As for boundedness, we reserve the proof to the supplementary material.
\end{proof}

With this control on the density perturbations in hand, we now calculate the corresponding first-order variations of the osmotic and current velocities. Let $(\delta u_t,\delta w_t)$ denote the perturbations of $(u_t^*, w_t^*) \coloneqq (u_t^{v^*},w_t^{v^*})$ induced by $h_t$ via
\begin{equation}\label{eq:du-dw-def}
\delta u_t \coloneqq \left.\frac{\mathrm d}{\mathrm d\varepsilon} u_t^{v^*+\varepsilon h}\right|_{\varepsilon=0}
= \frac{\sigma^2}{2} \nabla \left(\frac{\eta_t}{\rho_t^*}\right), 
\qquad 
\delta w_t \coloneqq h_t - \delta u_t.
\end{equation}
The expression for $\delta u_t$ is obtained by differentiating the osmotic velocity formula $u_t^v = \frac{\sigma^2}{2}\nabla\log\rho_t^v$, while the definition of $\delta w_t$ follows from the decomposition $v_t = u_t^v + w_t^v$ and the fact that the total drift variation is $h_t$.

Going forward, it will be convenient to modify our $L^2$-norm on $\gB$ slightly so that it is expressed in terms of the perturbations $(\delta u, \delta w)$. Specifically, we define
\begin{equation} \label{eq:newnorm}
 \| (\delta u, \delta w) \|_2^2 \coloneqq \int_0^1 \E_{\sP_{\mathrm{MSB}}}\left[\|\delta u_t(\rmX_t)\|_2^2 + \|\delta w_t(\rmX_t)\|_2^2\right]\rmd t.
\end{equation}
This new norm majorizes the norm $\frac{1}{\sqrt{2}}\|h\|_2$ on a neighborhood of $v^*$; in particular, it follows from Lemma \ref{lem:h-to-eta-bounded} that the composite map $(\delta u, \delta w) \mapsto h \mapsto \eta$ is bounded.

By construction, the norm (\ref{eq:newnorm}) is invariant under time-reversal: since $\sP_{\rm MSB} \in \sS$ and the quadratic form inside the expectation is symmetric under $(\delta u_t,\delta w_t)\mapsto(\delta u_{1-t},-\delta w_{1-t})$, the value of $\|h\|_2$ is unchanged when $h_t$ is replaced by its time-reversed counterpart. 

We can now compute the action of $J$ on $h$ in terms of $(\delta u, \delta w)$. The time-reversal transformation sends the pair $(u_t^{*},w_t^{*})$ to $(u_{1-t}^{v^*},-w_{1-t}^{v^*})$, i.e., it leaves the osmotic component invariant as $u_{1-t}^{v^*}=u_t^{v^*}$ by Lemma \ref{lem:mirrorsymmetry}, whilst reversing the time and sign of the current component. Thus, the action of $J$ on $h$ can be written as
\begin{equation}\label{eq:J-action}
(Jh)_t = \delta u_{1-t} - \delta w_{1-t}.
\end{equation}
Applying (\ref{eq:J-action}) twice yields that
$$
(J^2 h)_t(x) = J\big( (\delta u - \delta w)_{\cdot}(\cdot) \big)_t(x)
= \big(\delta u - (-\delta w)\big)_{1-(1-t)}(x) = (\delta u + \delta w)_t(x) = h_t(x),
$$
implying that $J^2 = \mathrm{Id}$. It follows that $J$ has eigenvalues $1$ and $-1$; denote the corresponding eigenspaces by $E_{\pm}$. Since $A = \tfrac12(\mathrm{Id} + J)$, we have that $A$ is the identity on $E_+$ and vanishes on $E_-$. Equivalently, $A$ is the projection onto $E_+$ along $E_-$. 

Finally, it is clear that $\gB$ is the algebraic direct sum of $\ker A$ and $\operatorname{im} A$, as the inclusion $\operatorname{im} A \subset \gB$ defines a splitting. 
    It remains to show that $E_+$ and $E_-$ are closed in $\gB$, but this follows from the fact that $E_+ = \ker(J - \mathrm{Id})$ and $E_- = \ker(J + \mathrm{Id})$ are both kernels of bounded linear operators on $\gB$ with respect to the norm (\ref{eq:newnorm}). Indeed, boundedness follows from the invariance of $\sP_{\rm MSB}$ under time-reversal together with the fact that (\ref{eq:newnorm}) is defined by a quadratic form that is invariant under $(\delta u,\delta w)\mapsto(\delta u,-\delta w)$ and $t\mapsto1-t$.
\end{proof}

\subsection{Symmetrized Formulation} 

The results of \S\ref{sec:linear} motivate us to replace the fixed point iteration (\ref{eq:direct-kl-1}) by a time-symmetrized variant in which each update is composed with the averaging operator $\gA$. Specifically, we consider the following iteration:
\begin{align}
    &v_t^{k+1}= \mathcal{A}(v_t). \label{eq:direct-kl-2}
\end{align}
This scheme is an approximation in the dynamical context of the symmetrized version of Sinkhorn's algorithm described by \cite{kurras2015symmetric} for the static Schr\"odinger bridge problem.

The advantage of using the scheme (\ref{eq:direct-kl-2}) is that at each iteration, it suppresses the time-antisymmetric component of the drift to first order, pushing the iterates closer to the time-symmetric subspace $E_+ \subset \gB$. Thus we expect that the iteration (\ref{eq:direct-kl-2}) converges to a good approximation of the mirror Schr\"odinger bridge drift $v^*$. We refer to the limit of (\ref{eq:direct-kl-2}), when it exists, as the \emph{mirror bridge} associated with $\pi$ and reference $\sP^0$.

In general, the mirror bridge need not coincide exactly with the mirror Schr\"odinger bridge $\sP_{\rm MSB}$, but our theoretical and empirical results suggest that they are very close in the settings of interest. We confirm this heuristic in two different ways. First, in \S\ref{sec:symgauss}, we derive the exact dynamical analogue of Kurras' symmetrization step in the Gaussian case and prove that drift averaging is a first-order approximation. Second, we show that the limit of our approximate scheme is close to the Schr\"odinger bridge in the Gaussian case and is numerically indistinguishable from the result of IPFP, as implemented by DSB; see Fig.~\ref{fig:convergence} and the supplementary material.

In \S\ref{sec:algorithm}, we translate the symmetrized iteration (\ref{eq:direct-kl-2}) into a practical algorithm involving a single neural network modeling the drift of the diffusion process associated with $\sP_{\rm{MSB}}$. This algorithm enjoys essentially half the computational cost, in terms of training iterations, for the mirror problem when compared with IPFP-based algorithms like DSB that separately parameterize forward and backward drifts.

\section{Analysis of the Gaussian Case}

This section is devoted to analyzing our approach to mirror Schr\"odinger bridges in a Gaussian context. In \S\ref{sec:analytical}, we derive the mirror Schr\"odinger bridge for Gaussian marginals with Ornstein--Uhlenbeck reference. In \S\ref{sec:symgauss}, we keep this setting, derive the dynamical analogue of Kurras' symmetrization step, and compare it to our drift-averaging approximation, showing that the latter is in fact second-order accurate when the target marginal is a small perturbation of the stationary law of the reference. In \S\ref{sec:variation} we quantify how the noise level $\sigma$ controls in-distribution variation of target samples produced by the mirror Schr\"odinger bridge, validating our approach as a method for proximal sample generation.

\subsection{Analytical Solution in the Gaussian Case}
\label{sec:analytical}

\begin{figure}[t]
    \centering
    \includegraphics[width=\linewidth]{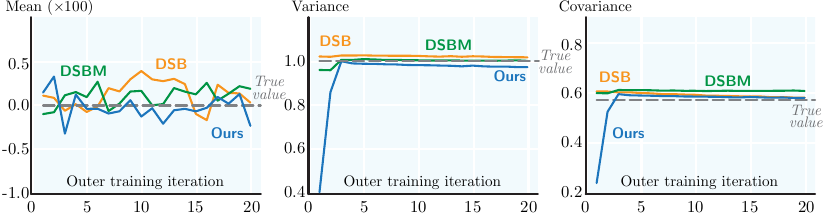}
    \vspace{-0.4cm}
    \caption{For each method, we plot the mean (left) and variance (middle) obtained for the terminal samples, i.e. samples obtained at time $t=T$, as well as the covariance (right) of the joint distribution, versus the number of outer iterations, averaged over 5 trials.}
    \label{fig:convergence}
    % \vspace{-0.5cm}
\end{figure}

The following result computes the mirror Schr\"odinger bridge in the case of Gaussian marginals and Ornstein--Uhlenbeck reference.
\begin{proposition} \label{prop:analytical}
    Consider the static Schr\"{o}dinger bridge problem with initial and final marginals equal to the $d$-dimensional Gaussian distribution with zero mean and unit variance, where we take the reference measure $\pi^0$ corresponding to the Ornstein--Uhlenbeck process $\rmd\rmX_t = -\alpha \rmX_t \rmd t + \sigma \rmd\rmW_t$ running from $t = 0$ to $t = 1$. The solution $\pi^*$ to this problem is a $2d$-dimensional Gaussian with zero mean and covariance matrix $\Sigma$ given by
    $$\Sigma = \begin{pmatrix} \Sigma_{00} & \Sigma_{01} \\ \Sigma_{10} & \Sigma_{11} \end{pmatrix} = \begin{pmatrix} \mI & \beta\mI \\ \beta\mI & \mI \end{pmatrix},\quad \text{where} \quad \beta = \frac{\sigma^2(1-e^{2 \alpha }) + \sqrt{16 e^{2 \alpha } \alpha ^2+\sigma^4\left(1-e^{2 \alpha }\right)^2 }}{4 \alpha e^\alpha}.$$
\end{proposition}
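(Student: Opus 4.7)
The plan is to reduce the static Schr\"odinger bridge problem with Gaussian marginals and Gaussian (OU) reference to a one-parameter convex optimization in an off-diagonal coupling coefficient $\beta$, and then solve the resulting stationarity condition explicitly.

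First, I would compute the reference joint explicitly. Solving the OU SDE with $\rmX_0 \sim \gN(0,\mI)$ (which, as noted in section~\ref{sec:preliminaries}, we may assume without changing the Schr\"odinger bridge solution), the joint law of $(\rmX_0,\rmX_1)$ under $\pi^0$ is a centered Gaussian on $\R^{2d}$ with covariance
\[
\Sigma^0 = \begin{pmatrix} \mI & e^{-\alpha}\mI \\ e^{-\alpha}\mI & (e^{-2\alpha} + \tau^2)\mI \end{pmatrix}, \qquad \tau^2 := \tfrac{\sigma^2}{2\alpha}(1 - e^{-2\alpha}).
\]

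Second, I would argue that the minimizer $\pi^*$ already lies in a one-parameter family. Any Schr\"odinger bridge has density of the form $\pi^*(x,y) \propto \phi_0(x)\phi_1(y)\pi^0(x,y)$; since $\pi^0$ is Gaussian and both prescribed marginals are Gaussian, $\phi_0,\phi_1$ must be exponentials of quadratic forms, so $\pi^*$ is itself Gaussian. Isotropy of the reference and marginals under $SO(d)$ forces each block of the covariance of $\pi^*$ to be a scalar multiple of $\mI$, the mirror time-symmetry forces the diagonal blocks to coincide, and the marginal constraints pin those diagonal blocks to $\mI$. This leaves the single parameter $\beta \in (-1,1)$ of the statement.

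Third, I would compute $\KL(\pi^* \,\Vert\, \pi^0)$ using the closed form for KL between centered Gaussians. The block structure makes each of the $d$ coordinate directions contribute identically, so after some algebra the $\beta$-dependent part of $2\KL/d$ reduces to $\tfrac{1 + e^{-2\alpha} - 2\beta e^{-\alpha}}{\tau^2} - \log(1 - \beta^2)$ plus a constant. Differentiating in $\beta$ yields the stationarity condition $\tfrac{\beta}{1-\beta^2} = \tfrac{e^{-\alpha}}{\tau^2}$, which rearranges to a quadratic in $\beta$. Taking the root in $(-1,1)$ and multiplying numerator and denominator by $e^{2\alpha}$ to clear negative exponentials then yields exactly the formula in the statement.

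The main obstacle is not the calculus but the algebraic bookkeeping: first, justifying cleanly why the minimizer falls in the one-parameter family (isotropy plus mirror symmetry, pinned by the marginal constraints), and second, massaging the positive root of the quadratic into the specific symmetric form involving $\sigma^2(1-e^{2\alpha})$ and $16\alpha^2 e^{2\alpha}$ in the proposition---the substitution $\tau^2 = \tfrac{\sigma^2}{2\alpha}(1-e^{-2\alpha})$ has to be unwound carefully to make the final expression match exactly.
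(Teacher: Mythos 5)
Your proposal is correct and follows essentially the same route as the paper's proof: compute the Gaussian joint covariance of the OU reference from its transition mean $xe^{-\alpha}$ and variance $\tfrac{\sigma^2}{2\alpha}(1-e^{-2\alpha})$, restrict to centered Gaussian couplings with identity diagonal blocks, write the closed-form KL, and solve the stationarity condition $\tfrac{\beta}{1-\beta^2}=e^{-\alpha}\sigma_1^{-2}$ for the root in $(-1,1)$. The only (minor) divergence is how you reduce the off-diagonal block to $\beta\mI$ --- you invoke $SO(d)$-isotropy and mirror symmetry of the unique minimizer, whereas the paper minimizes over all symmetric $S$ by diagonalizing and observing the objective is a sum of the same strictly convex function of each eigenvalue; both justifications are sound.
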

We reserve the proof of Proposition \ref{prop:analytical} to the supplementary material. We use the proposition in \S\ref{sec:variation} to quantify the in-distribution variation of samples generated via the mirror Schr\"odinger bridge. We also use it as the ground truth for our Gaussian transport experiments; see \S\ref{sec:experiments}.

\subsection{Symmetrization in the Gaussian Case} \label{sec:symgauss}

Fix parameters $\alpha > 0$ and $\sigma > 0$, and let $\sP^0$ be the law on path space of the one-dimensional Ornstein--Uhlenbeck process
$\mathrm{d}\rmX_t = -\alpha \rmX_t \mathrm{d}t + \sigma\mathrm{d}W_t,$
started from its stationary marginal $\rho_0 = \mathcal{N}(0,q_0)$ with $q_0 := \sigma^2/(2\alpha)$. Then each time-marginal under $\sP^0$ is equal to $\mathcal{N}(0,q_0)$ and $\sP^0$ is time-symmetric. Let $\pi = \mathcal{N}(0,q_1)$ be a target marginal with variance $q_1 > 0$, which we regard as a small perturbation of $\rho_0$, so that
$q_1 = q_0(1+\varepsilon)$
for some small parameter $\varepsilon$.

Starting from $\sP^0$, a single outer IPFP iteration consists of taking the direct KL projection onto the set of paths with initial marginal $\pi$, and then the direct KL projection onto the set of paths with final marginal $\pi$. We denote the corresponding path measures by $\sP'$ and $\sP''$, and their forward drifts by $f_t^{\sP'}$ and $f_t^{\sP''}$. Kurras' symmetrization then forms a new, time-symmetric path measure $\sP^{\mathrm{sym}}$ built from $\sP'$ and $\sP''$; we write $f_t^{\mathrm{sym}}$ for its forward drift. The drift $f_t^{\mathrm{sym}}$ is in general difficult to compute, but it is possible to do so in the Gaussian context. By contrast, drift-averaging approximation instead uses the pointwise average
$$f_t^{\mathrm{avg}}(x)
:=
\frac{1}{2}\Bigl(f_t^{\sP'}(x) + f_t^{\sP''}(x)\Bigr).$$

\begin{proposition} \label{prop:gausserror}
    Retain the setting described above. Then, as $\varepsilon \to 0$, we have that
$$f_t^{\mathrm{sym}}(x) - f_t^{\mathrm{avg}}(x)
=
\mathcal{O}(\varepsilon^2)x$$
uniformly in $t \in [0,1]$. Thus, when $\pi$ is a small perturbation of the stationary marginal of $\sP^0$, drift-averaging yields a second-order accurate approximation of Kurras' symmetrization step.
\end{proposition}

We reserve the proof of Proposition \ref{prop:gausserror} to the supplementary material. This result provides a concrete instance of the statement that drift averaging is an accurate approximation to Kurras' symmetrization step.

\subsection{Sampling with In-Distribution Variation}
\label{sec:variation}

In this section, we provide a short intuitive explanation of how our method allows for resampling
with prescribed proximity to an input sample. Given such a sample $x_0 \sim \pi$, we solve the SDE corresponding to the Schr\"odinger bridge to push $x_0$ forward in time, arriving at a final sample $x_1 \in \pi$. We want $x_1$ to be a variation of $x_0$, where the proximity of $x_1$ to $x_0$ correlates with the size of the noise coefficient $\sigma$. Justifying this mathematically requires understanding how the conditional distribution $\rmX_1 \mid \rmX_0 = x_0$, specifically its mean and variance, depend on $\sigma$. While these quantities do not in general have closed form expressions, it is possible to compute them exactly in the case where $\pi = \gN(0,\rmI)$ is a $1$-dimensional Gaussian.

In this case, let $\rmX_t$ be the diffusion process associated to the Schr\"odinger bridge, where the reference path measure corresponds to an Ornstein--Uhlenbeck process with drift coefficient $-\alpha$. In Proposition \ref{prop:analytical}, we determine the joint distribution of $\rmX_0$ and $\rmX_1$ in terms of a quantity $\beta$, which is a function of $\alpha$ and $\sigma$ that grows approximately as $1 + c(\alpha) \cdot \sigma^2$ for some function $c$. Let $p(x,y)$ denote the probability density function of the joint distribution of $\rmX_0$ and $\rmX_1$, and recall that $p(x,y)$ is the product of the conditional PDF of $\rmX_1 \mid \rmX_0$ with the PDF of $\rmX_0$. Using this fact in conjunction with Proposition \ref{prop:analytical}, the PDF of $\rmX_1 \mid \rmX_0 = x_0$ is
\begin{equation*}
\frac{p(x_0,y)}{p_{\rmX_0}(x_0)} = \textrm{exp}\bigg({-\frac{1}{2(1-\beta^2)}(x_0^2 - 2\beta x_0y + y^2) + \frac{x_0^2}{2}}\bigg) .   
\end{equation*}
From the right-hand side, we see that $\rmX_1 \mid \rmX_0 = x_0$ is Gaussian with mean $\beta x_0$ and variance $1 - \beta^2$. Thus changing the noise level $\sigma$ alters both the mean and the variance of samples pushed forward via the Schr\"odinger bridge. In Proposition~6.1, one can check that $\beta = \beta(\alpha,\sigma)$ decreases from $1$ to $0$ as $\sigma \to \infty$, so for small $\sigma$ the bridge leaves samples close to their starting point, while for large $\sigma$ it exhibits stronger mean reversion toward the center of the distribution. At the same time, the conditional variance $1 - \beta^2$ increases from $0$ toward $1$; in particular, a Taylor expansion of $\beta$ at $\sigma = 0$ shows that $1 - \beta^2$ grows on the order of $\sigma^2$ for small $\sigma$. Consequently, larger values of $\sigma$ produce samples with greater spread. We expect that similar effects occur even when $\pi$ is not Gaussian, namely, that $\sigma$ should be directly related to the proximity of generated samples. Since our method provides an approximation of the mirror Schr\"odinger bridge, we expect the mirror bridge to inherit these properties.

\section{Practical Algorithm}
\label{sec:algorithm}

We now describe an algorithm to solve the mirror bridge %\justin{just mirror bridge?} 
problem numerically, based on our  scheme from (\ref{eq:direct-kl-2}). We choose our reference path measure $\sP^0 \in \sS$ to be the law of an Ornstein--Uhlenbeck process $\rmX_t$ given by the SDE 
$\rmd\rmX_t=-\alpha\rmX_t\rmd t+\sigma\rmd \rmW_t$, for some $\alpha>0$.

To compute the drift-averaging operator $\gA$, we need to be able to compute the backward drift $b_t^{\sP}$ of a path measure given its forward drift $f_t^{\sP}$, but in practice, we cannot simply apply (\ref{eq:nelson}) as we do not have access to the marginal densities $\rho_t^{\sP}$. We use the trajectory-caching method developed in \cite{bortoli2021diffusion, vargas2021likelihood} to estimate the backward drift $b_t^{\sP}$. Trajectory-caching is principled on the fact that $b_t^{\sP}$ can be expressed in terms of the expected rate of change in $\rmX_t$ over time. Concretely, we have the following formula, which can be taken as a formal definition of the backward drift of a diffusion process: 
\begin{equation} \label{eq:infinitesimal}
   b_{1-t}^{\sP}(x)=\lim_{\gamma\to0} \E\left[\frac{\rmX_{t-\gamma}-\rmX_{t}}{\gamma} \;\middle|\; \rmX_{t}=x\right].
\end{equation}
To apply (\ref{eq:infinitesimal}) in practice, take a positive integer $M$ and let $\{\gamma_i\}_{i=1}^M$ be a sequence of $M$ discrete time steps with sequence of partial sums $\{\bar{\gamma}_i\}_{i=1}^M$. Then we construct a discrete representation of the stochastic process $\rmX_t$ by using the Euler--Maruyama method to generate a collection of $N$ sample trajectories $\{\rmX_{i}^j\}_{i,j = 0}^{M-1,N-1}$ starting at the initial distribution $\pi$ in accordance with the SDE $\rmd\rmX_t=f_t^{\sP}(\rmX_t)\rmd t+\sigma\rmd \rmW_t$, where we know the forward drift $f_t^{\sP}$ as the output of the previous iteration of (\ref{eq:direct-kl-2}). Explicitly, for all $i \in \{0,\dots,M-2\}$ and $j \in \{0,\dots, N-1\}$, we have
\begin{equation} \label{eq:eulermaruyama}
\rmX_{i+1}^j= \rmX_{i}^j + f_{\bar{\gamma}_i}^{\sP}(\rmX_{i}^j)\gamma_i + \sigma^j\sqrt{\gamma_i} \rmZ_i^j,
\end{equation}
where $\rmZ_i^j \sim \gN(0,\mI)$.
The limiting quantity in (\ref{eq:infinitesimal}) is then leveraged as the target of the loss function used to train a neural network $v_t^{\theta}$, which approximates the backward drift $b_t^\sP$ for a specified range of $\sigma$ values $[\sigma_{\min},\sigma_{\max}]$. Specifically, we define the loss $\ell$ in terms of the optimization parameter $\theta$ by
\begin{equation} \label{eq:loss}
    \ell=\frac{1}{N} \sum_{i = 1}^{M - 1} \sum_{j = 0}^{N-1} \bigg|\bigg|v_{\bar{\gamma}_{i+1}}^\theta(\rmX_{i+1}^j) - \frac{\rmX_{i}^j - \rmX_{i+1}^j}{\gamma_{i+1}} -\left(f_{\bar{\gamma}_i}^{\sP}(\rmX_{i+1}^j) - f_{\bar{\gamma}_i}^{\sP}(\rmX_{i}^j)\right)\bigg|\bigg|^2 
\end{equation}
Observe that the first two terms in the loss constitute the difference between the drift and the infinitesimal rate of change of the process $\rmX_t$, i.e., the discretization of the difference between\begin{wrapfigure}{r}{0.5\linewidth}
\begin{minipage}{\linewidth}
    \vspace{-0.3in}
    \begin{algorithm}[H]
    \caption{\sc{Mirror bridge}}
    \label{alg:mirror}
    \begin{algorithmic}[1]
    \ampalgo
    \end{algorithmic}
    \end{algorithm}  
    \vspace{-0.3in}
\end{minipage} 
\end{wrapfigure} the left- and right-hand sides of (\ref{eq:infinitesimal}).
The network parameters $\theta$ are then learned via gradient descent with respect to the loss function $\ell(\theta)$. The resulting function $v_t^\theta$, where $\theta$ minimizes the loss $\ell(\theta)$, approximates the desired backward drift, as is suggested by \cite[Proposition 3]{bortoli2021diffusion}. We then average the forward and backward drifts of $\sP$ to obtain the next iteration of (\ref{eq:direct-kl-2}). Practically speaking, if the path measure iterates are denoted by $\sP^k$, then the forward and backward drifts of $\sP^k$ are parametrized by neural networks $v_t^{\theta^k}$ and $v_t^{\theta^{k+1/2}}$, and we take the forward drift of $\sP^{2k+1}$ to be the average output of the networks $v_t^{\theta^{k}}$ and $v_t^{\theta^{k+1/2}}$. This is summarized in Algorithm \ref{alg:mirror}, where we denote the limiting drift as $v_t^{\theta^\star}$ and the maximum number of drift-averaging iterations as $K$.

\section{Experiments}
\label{sec:experiments}
% \vspace{-0.1cm}
%We demonstrate the flexibility of our method on a number of conditional sampling tasks. We first show numerical convergence against the solution of the mirror Schr\"odinger bridge in a case where an analytical solution is available. Next, we consider resampling from $2$-dimensional datasets and demonstrate control over the in-distribution variation of new data points, which is an added feature of our method. Lastly, we provide examples of image resampling, illustrating how our method can be used to produce image variations with control over the proximity to the original.
We demonstrate the utility of our method on a number of conditional resampling tasks, illustrating how it can be used to produce sample variations with control over the proximity to the initial sample.

\subsection{Gaussian transport}We start by comparing our method with two alternative algorithms, DSB \cite{bortoli2021diffusion} and DSBM \cite{shi2023diffusion}, when applied to the mirror Schr\"odinger bridge of multi-dimensional Gaussians. Fig. \ref{fig:convergence} shows that, in the case of dimension $d=50$, as the number of outer iterations increases, the empirical convergence of our method performs on par with both DSB and DSBM with the added benefit that each outer iteration with our algorithm requires half the training iterations. Recall that our method trains a single neural network to model a time-symmetrized drift function $v_t^\theta$ rather than a neural network for each of the forward and backward drift functions. More details on the derivation of the analytical solution, as well as information on parameters, can be found in the supplementary material. Additional results for dimensions $d=5,20$ can be found in the supplementary material.

% \begin{wrapfigure}{r}{0.5\textwidth}
%   \vspace{-1.1cm}
%   \begin{center}
%     \includegraphics[width=0.48\textwidth]{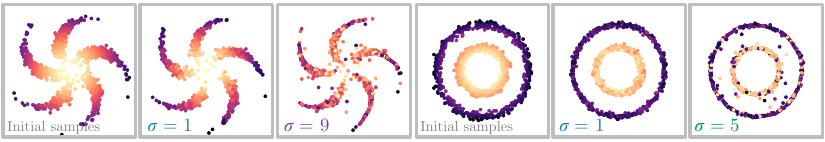}
%   \end{center}
%   \vspace{-0.4cm}
%   \caption{Samples obtained using our method with varying values of $\sigma$. Samples are colored based on initial position.}
%   \label{fig:2d-experiment}
%   \vspace{-0.6cm}
% \end{wrapfigure}

\subsection{2D datasets}To illustrate the behavior of our method, we use our algorithm to resample from 2-dimensional distributions \cite{pedregosa2011scikitlearn}.  Unlike the mirror Schr\"odinger bridge with Gaussians, an analytical solution with these more general distributions is not known. We consider learning the drift function $v_t^{\theta}$ associated with the mirror bridge that flows samples from $\pdata$ to itself. The goal is to obtain new samples that are in the distribution $\pdata$ but exhibit some level of variation, i.e., in-distribution variation, correlated to the noise coefficient $\sigma$ in the diffusion process. Note that computing mirror bridges with a range of noise values by training one neural network is not possible using existing alternative methods.

\begin{figure}
    % \vspace{-0.5cm}
    \begin{center}
    \includegraphics[width=\linewidth]{Figures/registration-2dim.pdf}
    \end{center}
    % \vspace{-0.5cm}
    \caption{Samples (color based on initial position) obtained using our method with various $\sigma$ values.}
    \label{fig:2d-experiment}
    % \vspace{-0.5cm}
\end{figure}

In Fig. \ref{fig:2d-experiment}, we show the result of flowing samples via the mirror bridge with varying values of noise. We observe that the in-distribution variation of data points is controlled by $\sigma$, which can indeed be detected by the mixing of colors, or lack thereof, in each terminal distribution shown. For instance, on the right, we find mixing from samples between the inner and outer circles with the largest $\sigma$, compared with no mixing between circles with the smallest $\sigma$.

\subsection{Image resampling} \begin{wrapfigure}{l}{0.4\textwidth}
%\begin{figure}[b]
    \vspace{-0.25in}
    \begin{center}
    \includegraphics[width=\linewidth]{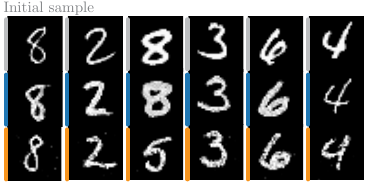}
    \end{center}
    \vspace{-0.1in}
    \caption{Samples produced by our mirror bridges for MNIST, using $\sigma=1,2$.}
    \label{fig:MNIST}
    \vspace{-0.2in}
%\end{figure}
\end{wrapfigure}We train our algorithm on the MNIST \cite{lecun1998mnist} and CelebA \cite{liu2015faceattributes} datasets. Training parameters and architecture for all experiments can be found in the supplementary material. Our results show that mirror bridges can be used to produce new samples from an image dataset with control over the proximity to the initial sample. In Fig. \ref{fig:MNIST}, we resample from MNIST using varying levels of noise (see also the supplementary material). Pushforward images obtained with a lower $\sigma$ value (\emph{middle}) are visually closer to the initial images (\emph{top}) than the ones obtained with a higher $\sigma$ value (\emph{bottom}).

Fig.~\ref{fig:closeup} shows the same control over in-distribution variation of pushforward samples using the RGB dataset CelebA. In each column, we exhibit a different sample from the dataset and, in each row, we show the corresponding pushforward obtained for different noise values. These results can be obtained without retraining the neural network. The typical metric to assess resampling quality for the image generation case is the Fréchet inception distance (FID) score, which we have plotted against training iterations. We observe FID scores decreasing with training iterations. The supplementary material includes more results using the CelebA dataset, including an analysis of the nearest neighbors in the dataset to the generated images. We find that the nearest neighbor of the generated sample is the initial sample itself, and the generated sample is distinct from all of its nearest neighbors, showing that our model does not simply regurgitate nearest neighbors of the initial sample as proximal outputs.
%Additional experiments, including a comparison to alternative methods for image resampling, further evaluation metrics, results on path regularity and control over proximity, are presented in Appendix \ref{sec:additional}.

\begin{figure*}
    % \vspace{-0.1in}
    \centering
    \includegraphics[width=\linewidth]{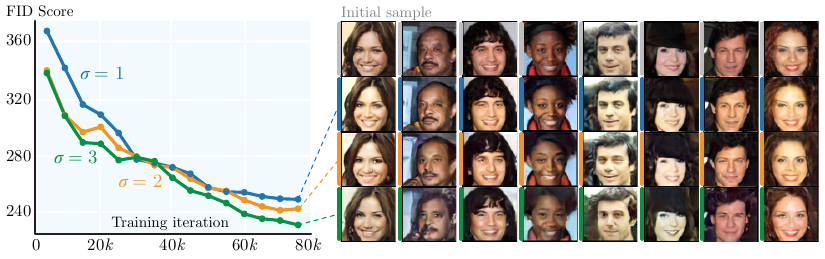}
    \vspace{-0.5cm}
    \caption{The control over in-distribution variance effect of $\sigma$ for a variety of initial samples (first row) from the empirical distribution of images in the CelebA dataset. 
    }
    \label{fig:closeup}
 \vspace{-0.2cm}
\end{figure*}

\subsection{Control Over Sample Proximity} We define proximity of samples using pixel-wise $L_2$ norm as our choice of distance metric. In Fig. \ref{fig:distance-regularity} (left), we demonstrate how larger values of $\sigma$ effectively produce pushforward samples that are farther in this distance metric, compared to samples generated with smaller values of $\sigma$. This experiment expands the results shown in Fig. \ref{fig:2d-experiment} to the case of resampling from image distributions. In particular, the mean and spread of the histograms in Fig. \ref{fig:distance-regularity} (right) show that larger values of $\sigma$ yield higher average distance values relative to the initial sample, as well as greater variation among these distances.

\begin{figure}
    \centering
    \includegraphics{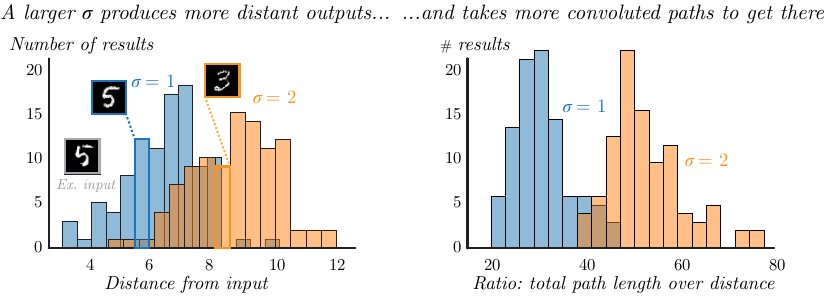}
    \vspace*{-0.2cm}
    \caption{On the left: Two histograms demonstrating how image samples generated with larger $\sigma$ correspond to less proximal samples relative to the initial image sample. On the right: Two histograms show the inverse ratio between displacement and total path length of sample paths as a metric of path straightness.}
    \label{fig:distance-regularity}
\vspace*{-0.2cm}
\end{figure}

\subsection{Sample Path Straightness} We present empirical results on the straightness of sample paths produced by our method. Specifically, in Fig. \ref{fig:distance-regularity} (right), we give a histogram for the values of a metric defined by taking the ratio of total displacement to total path length for different values of $\sigma$. For a given sample trajectory $\{\rmX_{i}\}_{i= 0}^{M-1}$, this metric is explicitly computed by dividing $\|\rmX_0-\rmX_{M-1}\|_2$ (total displacement) and $\sum_k\|\rmX_{k+1}-\rmX_{k}\|_2$ (total path length). The greater the value of this metric, the greater the variation in the trajectory; hence, smaller values of this metric are suggestive of greater sample path straightness. We find, as expected, that sample path straightness decreases as $\sigma$ increases.

\subsection{Comparison to Alternative Methods} \label{sec:comparison-mnist}
We compare our method with DSB and DSBM for image resampling with the MNIST dataset as the marginal distribution. For this experiment, we use the implementation for DSB and DSBM-IPF available in the \href{https://github.com/yuyang-shi/dsbm-pytorch/tree/main}{code repository} for \cite{shi2023diffusion}. We implement our algorithm based on the architecture provided, only modifying the model to take on $\sigma$ as an input for our method. We test all three methods with the same set of training parameters as described in the supplementary material. We train our model with $\sigma=1$ fixed to match the noise value in the SDE for the other two methods, which do not take $\sigma$ as a model input. 

We provide FID scores for each method in Fig. \ref{fig:comparison-mnist}. We observe that for DSB and DSBM, the forward and backward models result in pushforward samples of different quality. In particular, sample quality for the forward model is significantly lower than that of the backward. This indicates that neither of these methods converge to the mirror Schr\"odinger bridge for the given number of iterations, because the drift function for this bridge is necessarily time-symmetric, i.e., the forward and backward drifts must be equal to each other. In contrast, our algorithm provides time symmetry by construction: a single model is trained and forcibly ``symmetrized'' at each outer iteration via the drift-averaging procedure.

Also in Fig. \ref{fig:comparison-mnist}, we present a breakdown of runtime for each method obtained for the same experiment. Our method has significantly lower total runtime and average outer training iteration time. The latter is not surprising, considering that one of the key features of our algorithm is to eliminate training of a second neural network. We observe that the average inference time during training, however, is higher with our method. Overall, in this particular experiment, we see that our method makes a trade-off between a small reduction in sample quality for a significant speed-up in training, while also preserving time-symmetry.

\begin{figure}[t!]
    % \vspace{-0.3cm}
    \centering
    \includegraphics[width=\linewidth]{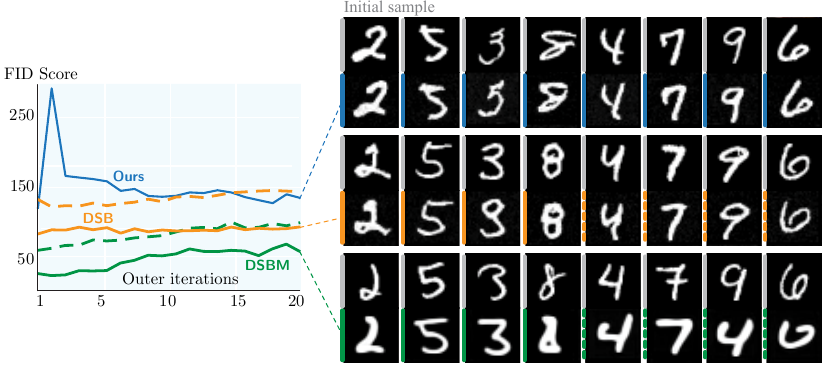}
    \vspace{-0.7cm}
    \caption{On the left: FID scores of pushforward samples versus outer iterations (single run) produced by our method, by DSB, and by DSBM, for a mirror bridge with the MNIST dataset as the marginal distribution. Solid lines correspond to backward models and dashed lines to forward models. On the right: Result of image resampling at outer iteration $20$. For each method and drift direction, the initial samples are displayed on the upper row and the pushforward samples on the lower.}%Breakdown of runtimes at iteration $20$ for the same experiment on each of the three methods.}
    \label{fig:comparison-mnist}
\end{figure}

\begin{table}[tbhp]
  \caption{Average runtimes for the experiment in Fig. \ref{fig:comparison-mnist}}
  \label{runtime-table}
  \centering
  \begin{tabular}{|c|c|c|c|c|}
    \hline
    & Ours     & DSB & DSBM-IPF &\\
    \hline
    Total          & $2.64$     & $5.25$    &  $12.47$ & \textit{hrs}\\
    Outer Iter.    & $7.94$     & $15.7$    &  $37.41$ & \textit{min}\\
    Inner Iter.    & $0.059$   & $0.055$   &  $0.209$ & \textit{s}\\
    Inference      & $2.009$    & $1.554$   &  $1.002$ & \textit{s}\\
    \hline
  \end{tabular}
\end{table}

\section{Conclusion}

By studying an overlooked version of the Schr\"odinger bridge problem, which we coin the \emph{mirror bridge}, we present an algorithm to sample with control over the in-distribution variation of new data points. Our method is flexible and requires fewer training iterations than existing alternatives \cite{bortoli2021diffusion,shi2023diffusion} designed for the general Schr\"odinger bridge problem. As a technique for estimating Schr\"{o}dinger bridges in the mirror case, our method presents an advantage over mirror interpolants \cite{albergo2023stochastic}, since optimizing the relative entropy of such an interpolant results in a $\min$-$\max$ optimization problem that is typically intractable \cite{shaul2023kinectic}. Our method is numerically tractable and well-principled, and it cuts down training in applications where control over in-distribution variation is desired. On the application front, we demonstrate that our method is a flexible tool to obtain new data points from empirical distributions in a variety of domains, including $2$-dimensional measures and image datasets. 

There are several promising avenues for future work. For instance, it would be valuable to sharpen the relationship between the fixed point of our drift-averaging iteration and the exact mirror Schr\"odinger bridge, thereby describing the accuracy of our estimate in a quantitative way. We also hope to study a potential $\sigma$ threshold for a sample to change class when resampled or to make ``class'' a neural network input, similar to text prompting in image generation. On the algorithmic side, an important next step would be to develop time-symmetrized variants of other methods for solving the Schr\"odinger bridge problem, like DSBM, which enjoys empirical advantages in efficiency and accuracy over its predecessor DSB.

\section*{Acknowledgments}
The authors would like to thank Vincent Sitzmann, Lingxiao Li, Artem Lukoianov and Christopher Scarvelis for discussion and feedback. We are also grateful to Suvrit Sra for thoughtful insights on related work. We thank Ahmed Mahmoud for proofreading.

\bibliographystyle{siamplain}
\bibliography{references}

@inproceedings{he2016deep,
  title={Deep residual learning for image recognition},
  author={He, Kaiming and Zhang, Xiangyu and Ren, Shaoqing and Sun, Jian},
  booktitle={Proceedings of the IEEE conference on computer vision and pattern recognition},
  pages={770--778},
  year={2016}
}

@inproceedings{kurras2015symmetric,
  title = {{Symmetric Iterative Proportional Fitting}},
  author = {Kurras, Sven},
  booktitle = {Proceedings of the Eighteenth International Conference on Artificial Intelligence and Statistics},
  pages = {526--534},
  year = {2015},
  editor = {Lebanon, Guy and Vishwanathan, S. V. N.},
  volume = {38},
  series = {Proceedings of Machine Learning Research},
  address = {San Diego, California, USA},
  month = {09--12 May},
  publisher = {PMLR},
}

@article{winkler2023score,
  AUTHOR = {Winkler, Ludwig and Ojeda, Cesar and Opper, Manfred},
  TITLE = {A Score-Based Approach for Training Schrödinger Bridges for Data Modelling},
  JOURNAL = {Entropy},
  VOLUME = {25},
  YEAR = {2023},
  NUMBER = {2},
  ARTICLE-NUMBER = {316},
  PubMedID = {36832682},
  ISSN = {1099-4300},
  DOI = {10.3390/e25020316}
}

@article{Chen2016control,
  AUTHOR = {Chen, Yongxin and Georgiou, Tryphon T. and Pavon, Michele},
  TITLE = {On the relation between optimal transport and {S}chr\"odinger bridges: a stochastic control viewpoint},
  JOURNAL = {J. Optim. Theory Appl.},
  FJOURNAL = {Journal of Optimization Theory and Applications},
  VOLUME = {169},
  YEAR = {2016},
  NUMBER = {2},
  PAGES = {671--691},
  ISSN = {0022-3239,1573-2878},
  MRCLASS = {93E20 (28A50 35Q35 49J20 49L20 49Q20 60J60)},
  MRNUMBER = {3489825},
  MRREVIEWER = {Luca\ Granieri},
  DOI = {10.1007/s10957-015-0803-z},
}

@article{trajanovski2023OU,
  AUTHOR = {Trajanovski, Pece and Jolakoski, Petar and Zelenkovski, Kiril and Iomin, Alexander and Kocarev, Ljupco and Sandev, Trifce},
  TITLE = {Ornstein-{U}hlenbeck process and generalizations: particle dynamics under comb constraints and stochastic resetting},
  JOURNAL = {Phys. Rev. E},
  FJOURNAL = {Physical Review E},
  VOLUME = {107},
  YEAR = {2023},
  NUMBER = {5},
  PAGES = {Paper No. 054129, 18},
  ISSN = {2470-0045,2470-0053},
  MRCLASS = {82C41 (82C31)},
  MRNUMBER = {4605972},
  DOI = {10.1103/physreve.107.054129},
}

@article{essid2019trav,
  AUTHOR = {Essid, Montacer and Pavon, Michele},
  TITLE = {Traversing the {S}chr\"odinger bridge strait: {R}obert {F}ortet's marvelous proof redux},
  JOURNAL = {J. Optim. Theory Appl.},
  FJOURNAL = {Journal of Optimization Theory and Applications},
  VOLUME = {181},
  YEAR = {2019},
  NUMBER = {1},
  PAGES = {23--60},
  ISSN = {0022-3239,1573-2878},
  MRCLASS = {49Q20 (47J25)},
  MRNUMBER = {3921394},
  DOI = {10.1007/s10957-018-1436-9},
}

@article{fortet1940res,
  AUTHOR = {Fortet, Robert},
  TITLE = {R\'esolution d'un syst\`eme d'\'equations de {M}. {S}chr\"odinger},
  JOURNAL = {J. Math. Pures Appl. (9)},
  FJOURNAL = {Journal de Math\'ematiques Pures et Appliqu\'ees. Neuvi\`eme S\'erie},
  VOLUME = {19},
  YEAR = {1940},
  PAGES = {83--105},
  ISSN = {0021-7824},
  MRCLASS = {45.0X},
  MRNUMBER = {2692},
  MRREVIEWER = {A.\ Erd\'elyi},
}

@article {leonard2014survey,
  author = {L\'eonard, Christian},
  title = {A survey of the {S}chr\"odinger problem and some of its connections with optimal transport},
  journal = {Discrete Contin. Dyn. Syst.},
  fjournal = {Discrete and Continuous Dynamical Systems. Series A},
  volume = {34},
  year = {2014},
  number = {4},
  pages = {1533--1574},
  issn = {1078-0947,1553-5231},
  mrclass = {60J25 (46N10 60F10)},
  mrnumber = {3121631},
  doi = {10.3934/dcds.2014.34.1533},
}

@article{benton1975Markov,
  author = {Jamison, Benton},
  title = {The {M}arkov processes of {S}chr\"odinger},
  journal = {Z. Wahrscheinlichkeitstheorie und Verw. Gebiete},
  fjournal = {Zeitschrift f\"ur Wahrscheinlichkeitstheorie und Verwandte Gebiete},
  volume = {32},
  year = {1975},
  number = {4},
  pages = {323--331},
  mrclass = {60J60 (60G30)},
  mrnumber = {383555},
  mrrviewer = {D.\ W.\ Stroock},
  doi = {10.1007/BF00535844},
}

@inbook{leonard2014measures,
  author="L{\'e}onard, Christian",
  editor="Donati-Martin, Catherine and Lejay, Antoine and Rouault, Alain",
  title="Some Properties of Path Measures",
  bookTitle="S{\'e}minaire de Probabilit{\'e}s XLVI",
  year="2014",
  publisher="Springer International Publishing",
  address="Cham",
  pages="207--230",
  isbn="978-3-319-11970-0",
  doi="10.1007/978-3-319-11970-0_8",
}

@article{schrodinger1932bridge,
  author = {Schr\"odinger, Erwin},
  title = {Sur la th\'eorie relativiste de l'\'electron et l'interpr\'etation de la m\'ecanique quantique},
  journal = {Annales de l'institut Henri Poincar\'e},
  volume = {3},
  year = {1932},
  pages = {269-310},
  zbl = {0004.42505},
  mrnumber = {1508000},
}

@inproceedings{bortoli2021diffusion,
  title={Diffusion {S}chr\"odinger Bridge with Applications to Score-Based Generative Modeling},
  author={De Bortoli, Valentin and James Thornton and Jeremy Heng and Arnaud Doucet},
  booktitle={Advances in Neural Information Processing Systems},
  editor={A. Beygelzimer and Y. Dauphin and P. Liang and J. Wortman Vaughan},
  year={2021},
}

@InProceedings{shi2022conditional,
  title = {Conditional simulation using diffusion {S}chr\"odinger bridges},
  author = {Shi, Yuyang and De Bortoli, Valentin and Deligiannidis, George and Doucet, Arnaud},
  booktitle = {Proceedings of the Thirty-Eighth Conference on Uncertainty in Artificial Intelligence},
  pages = {1792--1802},
  year = {2022},
  editor = {Cussens, James and Zhang, Kun},
  volume = {180},
  series = {Proceedings of Machine Learning Research},
  month = {01--05 Aug},
  publisher = {PMLR},
}

@inproceedings{zhou2024denoising,
  title={Denoising Diffusion Bridge Models},
  author={Linqi Zhou and Aaron Lou and Samar Khanna and Stefano Ermon},
  booktitle={The Twelfth International Conference on Learning Representations},
  year={2024},
}

@article{albergo2023stochastic,
  author = {Michael Albergo and Nicholas M. Boffi and Eric Vanden-Eijnden},
  title = {Stochastic Interpolants: A Unifying Framework for Flows and Diffusions},
  journal = {Journal of Machine Learning Research},
  year = {2025},
  volume = {26},
  number = {209},
  pages = {1--80},
}

@inproceedings{cuturi2013sinkhorn,
 author = {Cuturi, Marco},
 booktitle = {Advances in Neural Information Processing Systems},
 editor = {C.J. Burges and L. Bottou and M. Welling and Z. Ghahramani and K.Q. Weinberger},
 pages = {},
 publisher = {Curran Associates, Inc.},
 title = {Sinkhorn Distances: Lightspeed Computation of Optimal Transport},
 volume = {26},
 year = {2013}
}

@article{sinkhorn1964matrices,
  title={A Relationship Between Arbitrary Positive Matrices and Doubly Stochastic Matrices},
  author={Richard Sinkhorn},
  journal={Annals of Mathematical Statistics},
  year={1964},
  volume={35},
  pages={876-879}
}

@article{deming1940marginal,
  author = {W. Edwards Deming and Frederick F. Stephan},
  title = {{On a Least Squares Adjustment of a Sampled Frequency Table When the Expected Marginal Totals are Known}},
  volume = {11},
  journal = {The Annals of Mathematical Statistics},
  number = {4},
  publisher = {Institute of Mathematical Statistics},
  pages = {427 -- 444},
  year = {1940},
  doi = {10.1214/aoms/1177731829},
}

@inproceedings{
  song2021scorebased,
  title={Score-Based Generative Modeling through Stochastic Differential Equations},
  author={Yang Song and Jascha Sohl-Dickstein and Diederik P Kingma and Abhishek Kumar and Stefano Ermon and Ben Poole},
  booktitle={International Conference on Learning Representations},
  year={2021},
}

@inproceedings{ho2020denoising,
 author = {Ho, Jonathan and Jain, Ajay and Abbeel, Pieter},
 booktitle = {Advances in Neural Information Processing Systems},
 editor = {H. Larochelle and M. Ranzato and R. Hadsell and M.F. Balcan and H. Lin},
 pages = {6840--6851},
 publisher = {Curran Associates, Inc.},
 title = {Denoising Diffusion Probabilistic Models},
 volume = {33},
 year = {2020}
}

@inproceedings{song2019gradients,
 author = {Song, Yang and Ermon, Stefano},
 booktitle = {Advances in Neural Information Processing Systems},
 editor = {H. Wallach and H. Larochelle and A. Beygelzimer and F. d\textquotesingle Alch\'{e}-Buc and E. Fox and R. Garnett},
 pages = {},
 publisher = {Curran Associates, Inc.},
 title = {Generative Modeling by Estimating Gradients of the Data Distribution},
 volume = {32},
 year = {2019}
}

@inproceedings{sander2022sinkformers,
  author = {Michael E. Sander and Pierre Ablin and Mathieu Blondel and Gabriel Peyr{\'{e}}},
  title = {Sinkformers: Transformers with Doubly Stochastic Attention},
  booktitle = {International Conference on Artificial Intelligence and Statistics, {AISTATS} 2022, 28-30 March 2022, Virtual Event},
  series = {Proceedings of Machine Learning Research},
  volume = {151},
  pages = {3515--3530},
  publisher = {{PMLR}},
  year = {2022},
}

@article{vargas2021likelihood,
  author = {Vargas, Francisco and Thodoroff, Pierre and Lamacraft, Austen and Lawrence, Neil},
  title = {Solving {S}chr\"odinger Bridges via Maximum Likelihood},
  journal = {Entropy},
  volume = {23},
  year = {2021},
  number = {9},
  article-number = {1134},
}

@article{kuhn1955hungarian,
  author = {Kuhn, Harold W.},
  title = {The {H}ungarian method for the assignment problem},
  journal = {Naval Research Logistics Quarterly},
  volume = {2},
  number = {1-2},
  pages = {83-97},
  year = {1955}
}

@article{courty2017adaptation,
  author = {Courty, Nicolas and Flamary, R\'{e}mi and Tuia, Devis and Rakotomamonjy, Alain},
  title = {Optimal Transport for Domain Adaptation},
  year = {2017},
  issue_date = {Sept. 2017},
  publisher = {IEEE Computer Society},
  address = {USA},
  volume = {39},
  number = {9},
  issn = {0162-8828},
  journal = {IEEE Trans. Pattern Anal. Mach. Intell.},
  month = {sep},
  pages = {1853–1865},
  numpages = {13}
}

@inproceedings{shi2023diffusion,
  title={Diffusion {S}chr\"odinger Bridge Matching},
  author={Yuyang Shi and Valentin De Bortoli and Andrew Campbell and Arnaud Doucet},
  booktitle={Thirty-seventh Conference on Neural Information Processing Systems},
  year={2023},
}

@misc{agarwal2024iterated,
  title={Iterated {S}chr\"odinger bridge approximation to {W}asserstein Gradient Flows}, 
  author={Medha Agarwal and Zaid Harchaoui and Garrett Mulcahy and Soumik Pal},
  year={2024},
  eprint={2406.10823},
  archivePrefix={arXiv},
  primaryClass={math.PR},
}

@inproceedings{brekelmans2023expectation,
  title={On {S}chr\"odinger Bridge Matching and Expectation Maximization},
  author={Rob Brekelmans and Kirill Neklyudov},
  booktitle={NeurIPS 2023 Workshop Optimal Transport and Machine Learning},
  year={2023},
}

@inproceedings{vargas2023transport,
  title={Transport, {VI}, and Diffusions},
  author={Francisco Vargas and Nikolas N{\"u}sken},
  booktitle={ICML Workshop on New Frontiers in Learning, Control, and Dynamical Systems},
  year={2023},
}

@article{kullback1968probabilities,
  author = {Solomon Kullback},
  title = {{Probability Densities with Given Marginals}},
  volume = {39},
  journal = {The Annals of Mathematical Statistics},
  number = {4},
  publisher = {Institute of Mathematical Statistics},
  pages = {1236 -- 1243},
  year = {1968},
}

@article{ruschendorf1995iterative,
  author = {Ludger Ruschendorf},
  title = {{Convergence of the Iterative Proportional Fitting Procedure}},
  volume = {23},
  journal = {The Annals of Statistics},
  number = {4},
  publisher = {Institute of Mathematical Statistics},
  pages = {1160 -- 1174},
  year = {1995},
}

@inproceedings{shaul2023kinectic,
  title = {On Kinetic Optimal Probability Paths for Generative Models},
  author = {Shaul, Neta and Chen, Ricky T. Q. and Nickel, Maximilian and Le, Matthew and Lipman, Yaron},
  booktitle = {Proceedings of the 40th International Conference on Machine Learning},
  pages = {30883--30907},
  year = {2023},
  editor = {Krause, Andreas and Brunskill, Emma and Cho, Kyunghyun and Engelhardt, Barbara and Sabato, Sivan and Scarlett, Jonathan},
  volume = {202},
  series = {Proceedings of Machine Learning Research},
  month = {23--29 Jul},
  publisher =    {PMLR},
}

@inproceedings{feydy2019interpolating,
  title={Interpolating between optimal transport and mmd using sinkhorn divergences},
  author={Feydy, Jean and S{\'e}journ{\'e}, Thibault and Vialard, Fran{\c{c}}ois-Xavier and Amari, Shun-ichi and Trouv{\'e}, Alain and Peyr{\'e}, Gabriel},
  booktitle={The 22nd International Conference on Artificial Intelligence and Statistics},
  pages={2681--2690},
  year={2019},
  organization={PMLR}
}

@inproceedings{mensch2019geometric,
  title={Geometric losses for distributional learning},
  author={Mensch, Arthur and Blondel, Mathieu and Peyr{\'e}, Gabriel},
  booktitle={International Conference on Machine Learning},
  pages={4516--4525},
  year={2019},
  organization={PMLR}
}

@article{peluchetti2023mixture,
  author = {Stefano Peluchetti},
  title = {Diffusion Bridge Mixture Transports, {S}chr\"odinger Bridge Problems and Generative Modeling},
  journal = {Journal of Machine Learning Research},
  year = {2023},
  volume = {24},
  number = {374},
  pages = {1--51},
}

@inproceedings{vaswani2017attention,
 author = {Vaswani, Ashish and Shazeer, Noam and Parmar, Niki and Uszkoreit, Jakob and Jones, Llion and Gomez, Aidan N and Kaiser, \L ukasz and Polosukhin, Illia},
 booktitle = {Advances in Neural Information Processing Systems},
 editor = {I. Guyon and U. Von Luxburg and S. Bengio and H. Wallach and R. Fergus and S. Vishwanathan and R. Garnett},
 pages = {},
 publisher = {Curran Associates, Inc.},
 title = {Attention is All you Need},
 volume = {30},
 year = {2017}
}

@inproceedings{song2020techniques,
  author = {Song, Yang and Ermon, Stefano},
  title = {Improved techniques for training score-based generative models},
  year = {2020},
  isbn = {9781713829546},
  publisher = {Curran Associates Inc.},
  address = {Red Hook, NY, USA},
  booktitle = {Proceedings of the 34th International Conference on Neural Information Processing Systems},
  articleno = {1043},
  numpages = {11},
  location = {Vancouver, BC, Canada},
  series = {NIPS '20}
}

@inproceedings{liu2015faceattributes,
  title = {Deep Learning Face Attributes in the Wild},
  author = {Liu, Ziwei and Luo, Ping and Wang, Xiaogang and Tang, Xiaoou},
  booktitle = {Proceedings of International Conference on Computer Vision (ICCV)},
  month = {December},
  year = {2015} 
}

@article{pedregosa2011scikitlearn,
  title={Scikit-learn: Machine Learning in {P}ython},
  author={Pedregosa, F. and Varoquaux, G. and Gramfort, A. and Michel, V. and Thirion, B. and Grisel, O. and Blondel, M. and Prettenhofer, P. and Weiss, R. and Dubourg, V. and Vanderplas, J. and Passos, A. and Cournapeau, D. and Brucher, M. and Perrot, M. and Duchesnay, E.},
  journal={Journal of Machine Learning Research},
  volume={12},
  pages={2825--2830},
  year={2011}
}

@article{lecun1998mnist,
  title={The MNIST Database of Handwritten Digits},
  author={LeCun, Yann and Cortes, Corinna and Burges, CJ},
  journal={http://yann.lecun.com/exdb/mnist/},
  year={1998}
}

@inproceedings{vargas2024transport,
  title={Transport meets Variational Inference: Controlled Monte Carlo Diffusions},
  author={Francisco Vargas and Shreyas Padhy and Denis Blessing and Nikolas N{\"u}sken},
  booktitle={The Twelfth International Conference on Learning Representations},
  year={2024},
}

@article{pooladian2025plugin,
  author = {Pooladian, Aram-Alexandre and Niles-Weed, Jonathan},
  title = {Plug-in Estimation of Schrödinger Bridges},
  journal = {SIAM Journal on Mathematics of Data Science},
  volume = {7},
  number = {3},
  pages = {1315-1336},
  year = {2025},
  doi = {10.1137/24M1687340},
}

% \AddToHook{enddocument/afteraux}{%
% \immediate\write18{
% cp output.aux ex_article.aux
% }%
% }
\end{document}

% --- supplement: ex_supplement.tex ---

\maketitle

\section{Proof of Lemma \ref{lem:h-to-eta-bounded}: Boundedness} \label{sec:bounded}

We now establish the boundedness of the map $h \mapsto \eta$. To do so, define the weighted energy
$$
E(t) \coloneqq \frac{1}{2}\int_{\R^n} \eta_t(x)^2 \rho_t^*(x)\,\rmd x.
$$
Differentiating $E(t)$ and substituting in the PDEs satisfied by $\eta_t$ and $\rho_t^*$ gives
\begin{align} \label{eq:Ederiv}
\frac{\rmd}{\rmd t} E(t)
&=  \int_{\R^n} \eta_t\Big(
    -\nabla\cdot(\eta_t v_t^*)
    -\nabla\cdot(\rho_t^* h_t)
    + \frac{\sigma^2}{2}\Delta\eta_t
  \Big)\rho_t^*\,\rmd x + \\
&\qquad\qquad \frac{1}{2}\int_{\R^n} \eta_t^2\Big(
    -\nabla\cdot(\rho_t^* v_t^*)
    + \frac{\sigma^2}{2}\Delta\rho_t^*
  \Big)\,\rmd x. \nonumber
\end{align}
We now integrate by parts in each term. Writing $\rho_t^* = \rho$ and $v_t^* = v$ for brevity, we obtain
\begin{align*}
-\int_{\R^n} \eta \rho \,\nabla\cdot(\eta v)\,\rmd x
&= \int_{\R^n} \nabla(\eta \rho)\cdot(\eta v)\,\rmd x
= \int_{\R^n} \big(\rho\nabla\eta + \eta\nabla\rho\big)\cdot(\eta v)\,\rmd x \\
&= \int_{\R^n} \rho\,\eta\,v\cdot\nabla\eta\,\rmd x
  + \int_{\R^n} \eta^2 v\cdot\nabla\rho\,\rmd x,
\end{align*}
\begin{align*}
-\int_{\R^n} \eta \rho \,\nabla\cdot(\rho h)\,\rmd x
&= \int_{\R^n} \nabla(\eta \rho)\cdot(\rho h)\,\rmd x
= \int_{\R^n} \big(\rho\nabla\eta + \eta\nabla\rho\big)\cdot(\rho h)\,\rmd x \\
&= \int_{\R^n} \rho^2 \nabla\eta\cdot h\,\rmd x
  + \int_{\R^n} \eta \rho h\cdot\nabla\rho\,\rmd x,
\end{align*}
\begin{align*}
\int_{\R^n} \eta \rho \,\Delta\eta\,\rmd x
&= -\int_{\R^n} \nabla(\eta \rho)\cdot\nabla\eta\,\rmd x
= -\int_{\R^n} \big(\rho\nabla\eta + \eta\nabla\rho\big)\cdot\nabla\eta\,\rmd x \\
&= -\int_{\R^n} \rho\,|\nabla\eta|^2\,\rmd x
  - \int_{\R^n} \eta\,\nabla\rho\cdot\nabla\eta\,\rmd x,
\end{align*}
\begin{align*}
-\int_{\R^n} \eta^2 \nabla\cdot(\rho v)\,\rmd x
&= \int_{\R^n} \nabla(\eta^2)\cdot(\rho v)\,\rmd x
= 2\int_{\R^n} \eta \rho v\cdot\nabla\eta\,\rmd x,
\end{align*}
\begin{align*}
\int_{\R^n} \eta^2 \Delta\rho\,\rmd x
&= -\int_{\R^n} \nabla(\eta^2)\cdot\nabla\rho\,\rmd x
= -2\int_{\R^n} \eta\,\nabla\eta\cdot\nabla\rho\,\rmd x.
\end{align*}
Substituting the five calculations above into (\ref{eq:Ederiv}), we find that
\begin{align}
\frac{\rmd}{\rmd t} E(t)
&= 2\int_{\R^n} \rho\,\eta\,v\cdot\nabla\eta\,\rmd x
 + \int_{\R^n} \eta^2 v\cdot\nabla\rho\,\rmd x
 + \int_{\R^n} \rho^2 \nabla\eta\cdot h\,\rmd x
 + \int_{\R^n} \eta \rho h\cdot\nabla\rho\,\rmd x \label{eq:Eprime-weighted}\\
&\qquad\qquad - \frac{\sigma^2}{2}\int_{\R^n} \rho\,|\nabla\eta|^2\,\rmd x
 - \sigma^2 \int_{\R^n} \eta\,\nabla\rho\cdot\nabla\eta\,\rmd x. \nonumber
\end{align}
Since $\rho>0$, we can write $\nabla\rho = \rho\nabla\log\rho$, and by assumption we have $\|\nabla\log\rho\|_{L^\infty}\leq B$. Using this, (\ref{eq:Eprime-weighted}) becomes
\begin{align}
\frac{\rmd}{\rmd t} E(t)
&= 2\int_{\R^n} \rho\,\eta\,v\cdot\nabla\eta\,\rmd x
 + \int_{\R^n} \eta^2 \rho\,v\cdot\nabla\log\rho\,\rmd x
 + \int_{\R^n} \rho^2 \nabla\eta\cdot h\,\rmd x
 + \int_{\R^n} \eta \rho^2 h\cdot\nabla\log\rho\,\rmd x \nonumber\\
&\qquad\qquad - \frac{\sigma^2}{2}\int_{\R^n} \rho\,|\nabla\eta|^2\,\rmd x
 - \sigma^2 \int_{\R^n} \eta\,\rho\nabla\log\rho\cdot\nabla\eta\,\rmd x. \label{eq:Eprime-weighted2}
\end{align}
We now estimate each term on the right-hand side of (\ref{eq:Eprime-weighted2}) in terms of the norms of $\eta_t$ and $h_t$. For any $\alpha>0$, by the AM-GM inequality we have
\begin{align*}
2\int_{\R^n} \rho\,\eta\,v\cdot\nabla\eta\,\rmd x
&\leq 2\|v^*\|_{L^\infty} \int_{\R^n} \rho\,|\eta||\nabla\eta|\,\rmd x \leq \|v^*\|_{L^\infty}\left(
    \alpha\int_{\R^n} \rho\,|\nabla\eta|^2\,\rmd x
  + \frac{1}{\alpha}\int_{\R^n} \eta^2 \rho\,\rmd x
\right),
\end{align*}
and, using the bound on $\nabla\log\rho$, we have
\begin{align*}
\left|\int_{\R^n} \eta^2 \rho\,v\cdot\nabla\log\rho\,\rmd x\right|
&\leq \|v^*\|_{L^\infty} \|\nabla\log\rho\|_{L^\infty} \int_{\R^n} \eta^2 \rho\,\rmd x
\leq C_1 \int_{\R^n} \eta^2 \rho\,\rmd x
\end{align*}
for a constant $C_1>0$ depending only on $\|v^*\|_{L^\infty}$ and $B$.

For the terms involving $h_t$, using $\rho\leq M$ together with AM-GM, for any $\beta>0$ we have
\begin{align*}
\int_{\R^n} \rho^2 \nabla\eta\cdot h\,\rmd x
&\leq \int_{\R^n} \rho^2 |\nabla\eta||h| \,\rmd x\leq M \int_{\R^n} \rho\,|\nabla\eta|\,\|h\|_2\,\rmd x \\
&\leq M\left(
    \frac{\beta}{2}\int_{\R^n} \rho\,|\nabla\eta|^2\,\rmd x
  + \frac{1}{2\beta}\int_{\R^n} \|h\|_2^2 \rho\,\rmd x
\right),
\end{align*}
and similarly
\begin{align*}
\int_{\R^n} \eta \rho^2 h\cdot\nabla\log\rho\,\rmd x
&\leq M \|\nabla\log\rho\|_{L^\infty} \int_{\R^n} \rho\,|\eta|\,\|h\|_2\,\rmd x \\
&\leq C_2\left(
    \frac{\gamma}{2}\int_{\R^n} \eta^2 \rho\,\rmd x
  + \frac{1}{2\gamma}\int_{\R^n} \|h\|_2^2 \rho\,\rmd x
\right)
\end{align*}
for any $\gamma>0$ and some constant $C_2>0$ depending only on $M$ and $B$.

Finally, for the last term in (\ref{eq:Eprime-weighted2}), for any $\delta>0$ we have
\begin{align*}
\left| \sigma^2 \int_{\R^n} \eta\,\rho\nabla\log\rho\cdot\nabla\eta\,\rmd x \right|
&\leq \sigma^2 \|\nabla\log\rho\|_{L^\infty} \int_{\R^n} \rho\,|\eta||\nabla\eta|\,\rmd x \\
&\leq \sigma^2 B\left(
    \frac{\delta}{2}\int_{\R^n} \rho\,|\nabla\eta|^2\,\rmd x
  + \frac{1}{2\delta}\int_{\R^n} \eta^2 \rho\,\rmd x
\right).
\end{align*}

Combining these bounds with the diffusion term $-(\sigma^2/2)\int \rho\,|\nabla\eta|^2\,\rmd x$ in (\ref{eq:Eprime-weighted2}), and choosing the parameters $\alpha,\beta,\delta>0$ sufficiently small, depending only on $\sigma$, $M$, $B$, and $\|v^*\|_{L^\infty}$, we can ensure that the terms involving the squared gradient $|\nabla\eta|^2$ have a nonpositive total coefficient and may therefore be dropped. Consequently, there exist constants $A,B'>0$, depending only on $M$, $B$, $\sigma$, and $\|v^*\|_{L^\infty}$, such that
$$
\frac{\rmd}{\rmd t} E(t)
\leq A E(t) + B' \int_{\R^n} \|h_t(x)\|_2^2 \rho_t^*(x)\,\rmd x.
$$
Since $\eta_0 = 0$, we have $E(0)=0$, and Gr\"onwall's inequality yields
\begin{equation} \label{eq:Et-bound-weighted}
E(t)
\leq C \int_0^t \int_{\R^n} \|h_s(x)\|_2^2 \rho_s^*(x)\,\rmd x\,\rmd s
\end{equation}
for every $t \in [0,1]$, where $C>0$ depends only on $A,B'$. Integrating (\ref{eq:Et-bound-weighted}) in time over $[0,1]$ and using $t\leq 1$ gives
$$
\int_0^1 E(t)\,\rmd t
\leq C \int_0^1 \int_{\R^n} \|h_s(x)\|_2^2 \rho_s^*(x)\,\rmd x\,\rmd s.
$$
By the definition of $E(t)$, this is exactly  $\|\eta\|_2 \leq C \| h\|_2$, as necessary.

\section{Proof of Proposition \ref{prop:analytical}: MSB in the Gaussian Case} \label{sec:analyticalproof}

We follow the proof of Proposition 46 of \cite{bortoli2021diffusion}, which established the corresponding result in the case where the reference process has zero drift. Imitating the proof of Proposition 43 of \cite{bortoli2021diffusion}, we see that the static Schr\"{o}dinger bridge $\pi^*$ exists and is a $2d$-dimensional Gaussian. That the mean equals zero follows from the fact that both marginals have zero mean. The rest of the proof is devoted to determining the covariance matrix $\Sigma$ of $\pi^*$.

The fact that marginals have unit variance implies that $\Sigma_{00} = \Sigma_{11} = \mI$. To compute $\Sigma_{01}$ and $\Sigma_{10}$, we start by computing the probability density function (PDF) $p^0(x,y)$ of the reference measure $\pi^0$, where $x,y \in \R^d$. Recall that $p^0(x,y)$ is the product of the conditional PDF of $\rmX_1 \mid \rmX_0$ with the PDF of $\rmX_0$. Thus, we have
$$p^0(x,y) = p_{\rmX_1 \mid \rmX_0}(x,y) \times p_{\rmX_0}(x).$$
Note that $\rmX_0$ has zero mean and unit variance, so up to normalization we have
$$p_{\rmX_0}(x) \propto e^{-\frac{x^2}{2}}.$$
On the other hand, the mean and variance of the conditional distribution $\rmX_1 \mid \rmX_0$ are computed in section II of \cite{trajanovski2023OU}, where it is shown that they are respectively given by $$xe^{-\alpha} \quad \text{and} \quad \sigma_1^2 \coloneqq \frac{\sigma^2}{2\alpha}(1 - e^{-2\alpha}).$$
It follows that
$$p_{\rmX_1 \mid \rmX_0}(x,y) \propto e^{-\frac{1}{2\sigma_1^2}\left(y - e^{-\alpha}x\right)^2}.$$
Combining these calculations, we conclude that the joint distribution has PDF given by 
$$p^0(x,y) \propto e^{-\frac{1}{2}\left((1 + \sigma_1^{-2}e^{-2\alpha})x^2 - 2\sigma_1^{-2}e^{-\alpha}xy + \sigma_1^{-2}y^2 \right)}.$$
This distribution is evidently a Gaussian with zero mean and covariance matrix $\Sigma^0$ given by
$$\Sigma^0 = \begin{pmatrix} \mI & e^{-\alpha} \mI \\ e^{-\alpha}\mI & (\sigma_1^2+e^{-2\alpha}) \mI \end{pmatrix}.$$
Note in particular that the variance of the marginal of $\pi^0$ at $t = 1$ is equal to the coefficient of the bottom-right entry of $\Sigma^0$, which is $\sigma_1^2 + e^{-2\alpha}$. Now, the KL divergence between a $2$-dimensional Gaussian distribution $\widetilde{\pi}$ with zero mean and covariance matrix $\widetilde{\Sigma}$ and the distribution $\pi^0$ is given explicitly by
$$D_{\rm{KL}}(\widetilde{\pi} \;\Vert\; \pi^0) = \frac{1}{2}\left(\log\frac{\det \Sigma^0}{\det \widetilde{\Sigma}} - d + \Tr\big({\Sigma^0}^{-1}\widetilde{\Sigma}\big)\right).$$
If we take $\widetilde{\Sigma}$ to be of the form
$$\widetilde{\Sigma} = \begin{pmatrix} \mI & S \\ S^T & \mI \end{pmatrix},$$
which matches the form of the covariance $\Sigma$ for $\pi^*$, then 
$$D_{\rm{KL}}(\widetilde{\pi} \;\Vert\; \pi^0) = \frac{1}{2}\left(-\log \det \widetilde{\Sigma} - 2e^{-\alpha }\sigma_1^{-2} \Tr(S)  + C\right)$$
where $C \in \R$ is a nonzero constant independent of $\widetilde{\Sigma}$. As argued in the proof of Proposition 46 of \cite{bortoli2021diffusion}, we can assume $S = S^T$ is a symmetric matrix, as doing so will only decrease $D_{\rm{KL}}(\widetilde{\pi} \;\Vert\; \pi^0)$, so $S$ is diagonalizable. Let $\lambda_1, \dots, \lambda_d$ denote the eigenvalues of $S$, counted with multiplicity. Using the well-known formula for the determinant of a block $2 \times 2$ matrix, we find that
$$\det \widetilde{\Sigma} = \det(\mI - S^2) = \prod_{i = 1}^d (1 - \lambda_i^2).$$
Thus, we obtain
$$D_{\rm{KL}}(\widetilde{\pi} \;\Vert\; \pi^0) = \frac{1}{2}\sum_{i = 1}^d f(\lambda_i) + C, \quad \text{where} \quad f(x) = -\log(1 - x^2) - 2e^{-\alpha}\sigma_1^{-2}x.$$
Note in particular that since $\widetilde{\Sigma}$ is a covariance matrix, it is positive semi-definite, and so its eigenvalues $1 - \lambda_i^2$ must be nonnegative, implying that $|\lambda_i| \leq 1$ for each $i$.

Minimizing $\KL(\widetilde{\pi} \;\Vert\; \pi^0)$ then amounts to take $\lambda_1 = \cdots = \lambda_d = \beta$ in such a way that $f(\beta)$ is minimized. Observe that the equation 
$$f'(\beta) = \frac{2\beta}{1-\beta^2} - 2e^{-\alpha}\sigma_1^{-2} = 0$$
is solved by
$$\beta = \frac{\sigma^2(1-e^{2 \alpha }) \pm \sqrt{16 e^{2 \alpha } \alpha ^2+\sigma^4\left(1-e^{2 \alpha }\right)^2}}{4 \alpha e^\alpha}.$$
We then choose the sign to be $+$ to ensure that $|\beta| \leq 1$.
% For DSBM implementation: As a consequence of the fact that the variance of the marginal of $\pi^0$ at time $t = 0$ (resp., $t = 1$) is $1$ (resp., $\sigma_1^2 + e^{-2\alpha}$), it follows by the additivity of Gaussians that the final marginal can be obtained from the initial marginal by adding a Gaussian with zero mean and standard deviation equal to
% $$\sqrt{\sigma_1^2 + e^{-2\alpha} - 1} = \sqrt{\left(\frac{\sigma^2}{2\alpha} -1\right)(1 - e^{-2\alpha})}.$$

\section{Proof of Proposition~\ref{prop:gausserror}: Drift-Averaging for Gaussian Transport} \label{sec:gausserror}

Start by defining the ratio
$$r(x) \coloneqq \frac{\pi(x)}{\rho_0(x)}
  = \sqrt{\frac{q_0}{q_1}}\, \exp(ax^2),
  \quad \text{where} \quad
  a \coloneqq \frac12\left(\frac{1}{q_0} - \frac{1}{q_1}\right).$$
For the initial constraint, the direct KL projection onto $\sD(\pi,\cdot)$ has Radon--Nikodym derivative
$$\frac{\rmd\sP'}{\rmd\sP^0}(\rmX_\cdot)
  = \frac{1}{Z'} r(\rmX_0),$$
for a normalizing constant $Z'$. By Proposition~\ref{prop:directmarginal}, we have $f_t^{\sP'}(x) = f_t^{\sP^0}(x) = -\alpha x$. As for the final constraint, the direct KL projection onto $\sD(\cdot,\pi)$ has Radon--Nikodym derivative
$$\frac{\rmd\sP''}{\rmd\sP^0}(\rmX_\cdot)
  = \frac{1}{Z''} r(\rmX_1),$$
for a normalizing constant $Z''$. For each $t \in [0,1]$, define the Schr\"odinger potential
$$\beta_t(x)
  \coloneqq \mathbb{E}_{\sP^0}\bigl[r(\rmX_1)\mid \rmX_t = x\bigr].$$
By Nelson’s relation (\ref{eq:nelson}), the forward drift of $\sP''$ is given by
\begin{equation}\label{eq:nelsonapplied}
  f_t^{\sP''}(x)
  = f_t^{\sP^0}(x) + \sigma^2\,\partial_x\log \beta_t(x)
  = -\alpha x + \sigma^2\,\partial_x\log \beta_t(x).
\end{equation}

Under the reference $\sP^0$, the pair $(\rmX_1,\rmX_t)$ is jointly Gaussian with mean $(0,0)$ and covariance
$$\mathrm{Cov}(\rmX_1,\rmX_1) = q_0 =  \mathrm{Cov}(\rmX_t,\rmX_t),\quad
  \mathrm{Cov}(\rmX_1,\rmX_t) = q_0c_t,
  \quad \text{where} \quad
  c_t \coloneqq e^{-\alpha(1-t)}.$$
Therefore, conditional on $\rmX_t=x$, we have that $\rmX_1$ is Gaussian with mean and variance
$$\mu_t(x) \coloneqq \mathbb{E}[\rmX_1\mid \rmX_t=x] = c_t x,
  \qquad
  s_t^2 \coloneqq \mathrm{Var}(\rmX_1\mid \rmX_t=x) = q_0\bigl(1-c_t^2\bigr).$$
Let $Y$ be a scalar Gaussian $Y\sim\mathcal{N}(\mu,s^2)$. For any real $a$ with $1-2a s^2>0$, one has
\begin{equation}\label{eq:Gaussian-moment}
  \mathbb{E}\bigl[\exp(aY^2)\bigr]
  = \frac{1}{\sqrt{1-2a s^2}}\,
    \exp \left(
      \frac{a\mu^2}{1-2a s^2}
    \right).
\end{equation}
Since $r(x) \propto \exp(ax^2)$, multiplicative constants factor out of the conditional expectation defining $\beta_t$. Applying (\ref{eq:Gaussian-moment}) with $Y=(\rmX_1\mid \rmX_t=x)$, $\mu=\mu_t(x)$, and $s^2=s_t^2$, we obtain
$$\beta_t(x)
  = \frac{1}{\sqrt{1-2a s_t^2}}
    \exp \left( \frac{ac_t^2}{1-2a s_t^2}x^2 \right).$$
Substituting this into~\eqref{eq:nelsonapplied} gives
\begin{equation}\label{eq:fB}
  f_t^{\sP''}(x)
  = -\alpha x + \sigma^2\cdot 2\frac{ac_t^2}{1-2a s_t^2}x
  = \left(-\alpha + 2\sigma^2\,\frac{ac_t^2}{1-2a s_t^2}\right)x.
\end{equation}

Kurras' symmetrization step is defined by taking the geometric mean of the two one-sided projections \cite{kurras2015symmetric}. Concretely, we define a new path measure $\sP^{\mathrm{sym}}$ by
$$\frac{\mathrm{d}\sP^{\mathrm{sym}}}{\mathrm{d}\sP^0}(\rmX_\cdot)
  \propto \sqrt{ \frac{\mathrm{d}\sP'}{\mathrm{d}\sP^0}(\rmX_\cdot)\,
                   \frac{\mathrm{d}\sP''}{\mathrm{d}\sP^0}(\rmX_\cdot) }
  \propto \sqrt{r(\rmX_0)\,r(\rmX_1)}.$$
Since $r(x) \propto \exp(ax^2)$, we may write
$$\frac{\mathrm{d}\sP^{\mathrm{sym}}}{\mathrm{d}\sP^0}(\rmX_\cdot)
  \propto h(\rmX_0)\,h(\rmX_1),
  \quad \text{where} \quad
  h(x) \coloneqq \exp \left(\frac{a}{2}\,x^2\right).$$
As before, the forward drift of $\sP^{\mathrm{sym}}$ is given by
$$f^{\mathrm{sym}}_t(x)
  = f_t^{\sP^0}(x) + \sigma^2\partial_x\log h_t(x)
  = -\alpha x + \sigma^2\partial_x\log h_t(x),$$
where
$$h_t(x) \coloneqq \mathbb{E}_{\sP^0}\bigl[h(\rmX_1)\mid \rmX_t=x\bigr]
  = \mathbb{E}_{\sP^0}\left[ \exp \left(\frac{a}{2}\rmX_1^2\right)\,\middle|\,\rmX_t=x \right].$$
We again use (\ref{eq:Gaussian-moment}), now with $a$ replaced by $a/2$ to conclude that
$$h_t(x) = \frac{1}{\sqrt{1-a s_t^2}}\exp \left( \frac{(a/2)c_t^2}{1-a s_t^2}\,x^2 \right).$$
Thus the forward drift of the symmetric IPFP step is
\begin{equation}\label{eq:fsym}
  f^{\mathrm{sym}}_t(x)
  = -\alpha x + \sigma^2\,\frac{ac_t^2}{1-a s_t^2}\,x
  = \left(-\alpha + \sigma^2\,\frac{ac_t^2}{1-a s_t^2}\right)x.
\end{equation}
On the other hand, our drift-averaging approximation takes the arithmetic mean of the forward drifts of $\sP'$ and $\sP''$:
$$f^{\mathrm{avg}}_t(x)
  \coloneqq \frac12\Bigl(f_t^{\sP'}(x) + f_t^{\sP''}(x)\Bigr).$$
Using $f_t^{\sP'}(x) = -\alpha x$ and (\ref{eq:fB}), we obtain
\begin{equation}\label{eq:favg}
  f^{\mathrm{avg}}_t(x)
  = \left(-\alpha + \sigma^2\,\frac{ac_t^2}{1-2a s_t^2}\right)x.
\end{equation}

We now compare (\ref{eq:fsym}) and (\ref{eq:favg}) when the target marginal $\pi$ is close to the stationary marginal. Let $q_1 = q_0(1+\varepsilon)$ for some small parameter $\varepsilon$, with $0 \leq |\varepsilon|\ll 1$. Then
$$\frac{1}{q_1} = \frac{1}{q_0(1+\varepsilon)}
  = \frac{1}{q_0}\bigl(1 - \varepsilon + \varepsilon^2 + O(\varepsilon^3)\bigr),$$
and therefore
\begin{align*}
  a & = \frac12\left(\frac{1}{q_0} - \frac{1}{q_1}\right)
   = \frac12\frac{1}{q_0}\Bigl(1 - (1 - \varepsilon + \varepsilon^2 + O(\varepsilon^3))\Bigr) = \frac{\varepsilon}{2q_0} - \frac{\varepsilon^2}{2q_0} + O(\varepsilon^3).
\end{align*}
In particular, $a = O(\varepsilon)$ as $\varepsilon\to 0$. For fixed $t\in[0,1]$, $c_t$ and $s_t^2$ are constants, so we can treat them as such when expanding in powers of $a$. Define
$$C^{\mathrm{sym}}_t \coloneqq \frac{ac_t^2}{1-a s_t^2},
  \qquad
  C^{\mathrm{avg}}_t \coloneqq \frac{ac_t^2}{1-2a s_t^2},$$
so that
$$f^{\mathrm{sym}}_t(x) = \bigl(-\alpha + \sigma^2 C^{\mathrm{sym}}_t\bigr)x,
  \qquad
  f^{\mathrm{avg}}_t(x) = \bigl(-\alpha + \sigma^2 C^{\mathrm{avg}}_t\bigr)x.$$
For $|a|$ small,
$$\frac{1}{1-a s_t^2} = 1 + a s_t^2 + a^2 s_t^4 + O(a^3),
  \qquad
  \frac{1}{1-2a s_t^2} = 1 + 2a s_t^2 + 4a^2 s_t^4 + O(a^3).$$
Therefore, $C^{\mathrm{sym}}_t - C^{\mathrm{avg}}_t
  = -a^2c_t^2 s_t^2 + O(a^3)$, and so
$$f^{\mathrm{sym}}_t(x) - f^{\mathrm{avg}}_t(x)
  = \sigma^2\bigl(C^{\mathrm{sym}}_t - C^{\mathrm{avg}}_t\bigr)x
  = -\sigma^2 a^2c_t^2 s_t^2x + O(a^3)x.$$
Since $a = O(\varepsilon)$, we conclude $f^{\mathrm{sym}}_t(x) - f^{\mathrm{avg}}_t(x) = O(\varepsilon^2)x$ uniformly in $t\in[0,1]$ as $\varepsilon\to 0$.
  
\section{Implementation Details and Additional Results}
\label{sec:details}

In this section we give further details on our experimental setup and include some additional results. Akin to \cite[Technique 5]{song2020techniques}  and \cite[Technique 6]{bortoli2021diffusion}, we improve performance of Algorithm \ref{alg:mirror} by implementing the exponential moving average (EMA) of network parameters. We run our experiments on a NVIDIA GeForce RTX $3090$ GPU with $24$GB of memory.

\subsection{Gaussian Transport}

We use the MLP large network from \cite{bortoli2021diffusion} for DSB and DSBM in all Gaussian transport experiments. For our method, we modify this network to take $\sigma$ as an input. The values of $\sigma$ are uniformly sampled from the (inclusive) interval from $1$ to $5$ for training, and at test time we fix $\sigma=1$ for all samples to compare with DSB and DSBM, which do not take $\sigma$ as a network input, but each use $\sigma=1$ via the SDE discretization.  We run the same experiment for dimension $d=5$ and $d=20$ (in Fig. \ref{fig:additional-convergence}), and $d=50$ (in Fig. \ref{fig:convergence}). The number of samples for all experiments is $10{,}000$. We use $20$ timesteps and train for $10{,}000$ inner iterations for each of $20$ outer iterations.

\begin{figure}
    % \vspace{-0.3cm}
    \centering
    \includegraphics[width=\linewidth]{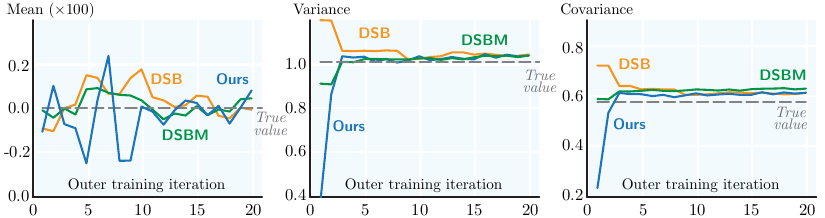}
    \includegraphics[width=\linewidth]{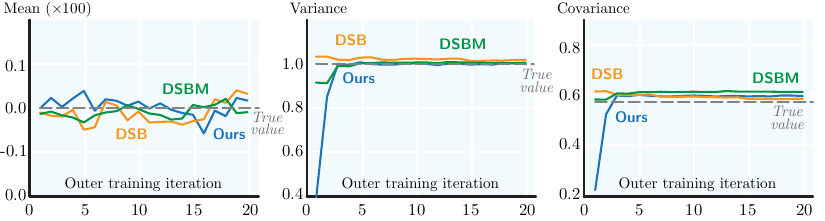}
    % \vspace{-0.7cm}
    \caption{For each method, we plot the mean (left) and variance (middle) obtained for the terminal samples, i.e. samples obtained at time $t=T$, as well as the covariance (right) of the joint distribution, versus the number of outer iterations, averaged over 5 trials. Top: $d=5$. Bottom: $d=20$.}
    \label{fig:additional-convergence}
    % \vspace{-0.5cm}
\end{figure}

\subsection{2D Datasets}

We modify the network architecture with positional encoding from \cite{vaswani2017attention}, which is used by \cite{bortoli2021diffusion}, to take values of noise $\sigma$ rather than tuples of only $\rmX$ and $t$. The values of $\sigma$ are concatenated to the spatial features before the first MLP block is applied. This modified network is used to parametrize our drift function. We use Adam optimizer with learning rate $10^{-4}$ and momentum $0.9$. We train each example for $10{,}000$ inner iterations per outer iteration of the algorithm. Fig. \ref{fig:2d-experiment} shows the terminal samples obtained for outer iteration $30$ for all example datasets. The noise values $\sigma^j$ are sampled uniformly in the range from $1$ to $9$ for training. At test time, a fixed $\sigma$ value is chosen for all sample trajectories. We train with $10{,}000$ samples, which are refreshed each $1{,}000$ iterations. We use $20$ timesteps of size $0.01$ each. All $2$-dimensional experiments run on CPU.

%\subsection{Integrity of Initial Distribution} 
We compute Chamfer distances as a means of measuring the proximity of the pushforward distributions exhibited in Fig. \ref{fig:2d-experiment} to the corresponding initial distributions. In the mirror case, the pushforward distribution should match the initial distribution, and the Chamfer distance between them should therefore decay as the number of iterations grows. In Fig. \ref{fig:chamfer}, we demonstrate how the Chamfer distance decays over outer iterations of our method for the same 2D distribution with different values of $\sigma$ (left), as well as how the Chamfer distance decays for different datasets with fixed $\sigma$ value (right).

\begin{figure}
    \centering
    \includegraphics{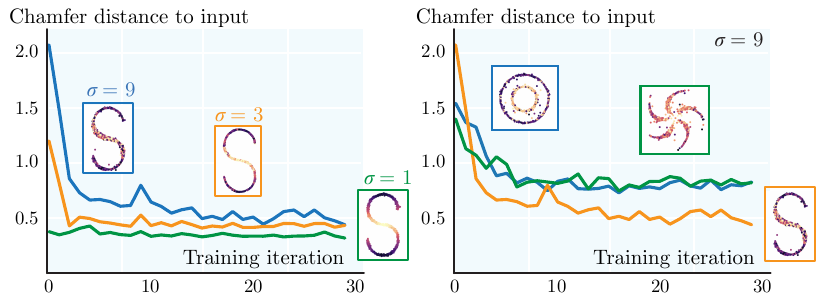}
    % \vspace{-0.5cm}
    \caption{On the left: Three curves, each corresponding to a different $\sigma$ value, showing convergence using Chamfer distance for the same 2D dataset (shown in Fig. \ref{fig:2d-experiment}). On the right: Three curves, each corresponding to a different 2D dataset, showing convergence for a fixed $\sigma$ value.}
    \label{fig:chamfer}
\end{figure}

\subsection{Image Resampling}\label{sec:image-details}

For the image dataset experiments, we modify the U-Net architecture used in \cite{bortoli2021diffusion, shi2023diffusion} to take values of noise $\sigma$. Each value $\sigma^j$ is expanded to match image size and concatenated to channels of their corresponding sample image $j$ before the input block is applied. For all image experiments we follow the timestep $\gamma$ schedule used in \cite{bortoli2021diffusion} with $\gamma_{\rm{min}}=10^{-5}$ and $\gamma_{\rm{max}}=0.1$.  We use Adam optimizer with learning rate $10^{-4}$ and momentum $0.9$. Experiments with image datasets were run on limited shared GPU resources; lower-resolution image sizes and number of samples in cache were chosen accordingly.

\subsection*{MNIST} For the experiment in Fig. \ref{fig:MNIST}, we use $10{,}000$ cached images of size $28\times28$; the batch size is $128$ and the number of timesteps is $30$. The noise values are sampled uniformly in the interval from $1$ to $5$ (inclusive) during training. We train for $5{,}000$ iterations per outer iterations, and cached samples are refreshed every $1{,}000$ inner iterations. The terminal samples shown are for outer iteration $8$.

\begin{figure}
    \centering
    \includegraphics[width=\linewidth]{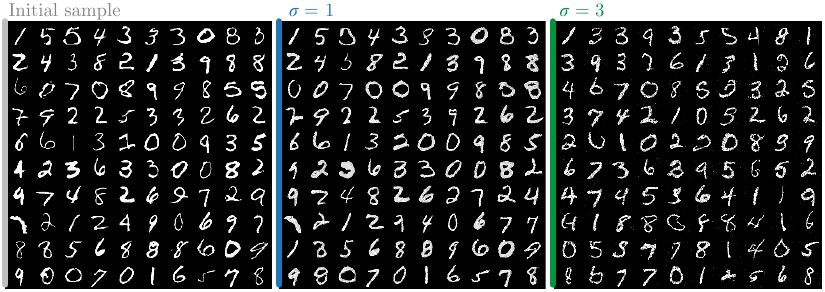}
    % \vspace{-0.7cm}
    \caption{Additional results for the empirical distribution of handwritten digits from which the examples in Fig. \ref{fig:MNIST} are obtained.}
    \label{fig:mnist-additional}
\end{figure}

\subsection*{CelebA} In Fig. \ref{fig:closeup} and \ref{fig:CelebA}, we use $300$ cached images of size $64\times64$ and batch size $128$. The cache is refreshed every $100$ inner iterations and we train for $5{,}000$ iterations per outer iterations. The number of timesteps is $50$; the $\sigma$ values are uniformly sampled in the interval from $1$ to $3$. The terminal sample images are shown for outer iteration $15$. The FID score in Fig. \ref{fig:closeup} is computed using $300$ images.

\begin{figure}
    \centering
    \includegraphics{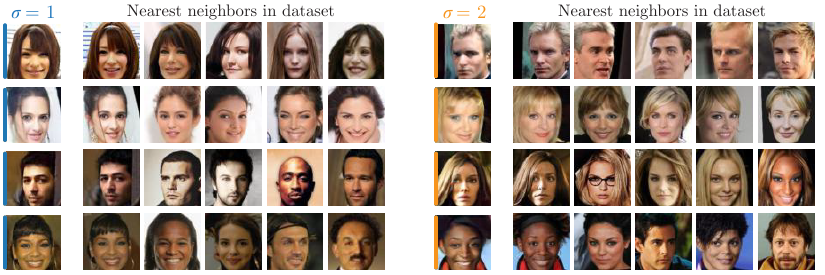}
    % \vspace{-0.5cm}
    \caption{For our generated results (first and seventh columns), we show the five nearest neighbors in the CelebA dataset as measured through the features extracted by ResNet50 \cite{he2016deep}.}
    \label{fig:knn}
    % \vspace{-0.5cm}
\end{figure}

\begin{figure}[ht]
    % \vspace{-0.3cm}
    \centering
    \includegraphics[width=\linewidth]{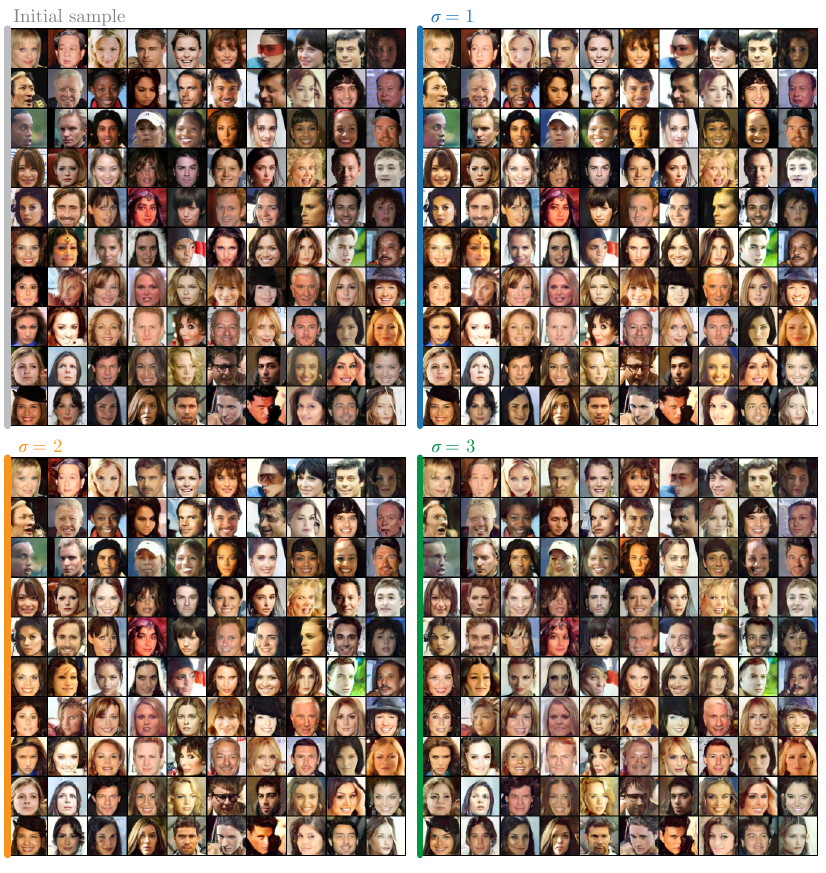}
    % \vspace{-0.9cm}
    \caption{Additional results for the empirical distribution of images in CelebA from which the examples in Fig. \ref{fig:closeup} are obtained.}
    \label{fig:CelebA}
    % \vspace{-0.5cm}
\end{figure}

\bibliographystyle{siamplain}
\bibliography{references}

% \AddToHook{enddocument/afteraux}{%
% \immediate\write18{
% cp output.aux ex_supplement.aux
% }%
% }